\documentclass[11pt]{article}
\usepackage[margin=1in]{geometry}
\usepackage{amsmath, amssymb, amsthm, bm}
\usepackage{graphicx}
\usepackage{comment}
\usepackage{xcolor}
\usepackage{natbib}
\usepackage{amsmath,amssymb}
\usepackage{booktabs} 
\newtheorem{lemma}{Lemma}
\newtheorem{theorem}{Theorem}
\title{On Stability and Decomposition of Sample Quantiles under Heavy-Tailed Distributions}
\author{Choudur Lakshminarayan \\
School of Business, Stevens Institute of Technology\\ email: lchoudur@stevens.edu}
\date{}

\begin{document}

\maketitle

\begin{abstract}
We study sample quantiles for Value-at-Risk tail measures based on linear projections of financial returns whose underlying distributions are heavy-tailed, when both the projection direction and the empirical quantile threshold are estimated from the data. Bahadur's representation is the canonical starting point for sample quantile asymptotics, but under a fixed distribution it identifies only the empirical quantile fluctuation and does not by itself separate the instability induced by estimating the projection direction.  Empirical-process theory provides useful scaffolding through half-spaces, symmetric differences, and Glivenko--Cantelli uniform convergence. These tools yield stability bounds, but they absorb changes in projection direction and changes in quantile threshold into a single symmetric-difference measure. Thus, a global uniform-convergence requirement is imposed on what is intrinsically a local quantile-stability problem.

This paper introduces a Q-Q orthogonality formulation that separates projection-direction and quantile-threshold effects. The object of interest is $\hat q_\alpha(\hat w)-q_\alpha(w_0)$, the difference between the empirical quantile computed using the estimated projection direction and the population quantile computed at the reference direction. We decompose this difference as $\hat q_\alpha(\hat w)-q_\alpha(w_0)=D_1+D_2+D_3$, where $D_1$ measures population quantile movement induced by perturbing the projection direction, $D_2$ measures empirical quantile fluctuation with the projection direction held fixed, and $D_3$ is the Bahadur-type remainder.
\end{abstract}

\section{Notation}

\begin{itemize}

\item{$R \equiv (R_{1}, R_{2}, \dots, R_{p}) \in \mathbb{R}^{p}$: Return vector}

\item{$w \equiv (w_{1}, w_{2}, \dots, w_{p}) \in \mathbb{R}^{p}$: Weight vector}

\item{$\hat{w}$: Estimated weight vector}

\item{$L = -w^{\top} R$: Loss, a scalar function of returns}

\item{$L_{0} = -w_{0}^{\top} R$: Reference loss at $w_{0}$}

\item{$F_{L}(t) = \mathbb{P}(L \le t)$: Distribution function of $L$}

\item{$F_{n}(t) = \frac{1}{n}\sum_{i=1}^{n} \mathbf{1}\{L_{i} \le t\}$: Empirical distribution function}

\item{$q_{\alpha}$, $q_{\alpha}(w_{0})$: Population $\alpha$-quantile satisfying $F_{L}(q_{\alpha}) = \alpha$}

\item{$VaR_{\alpha}$= True VaR at level $\alpha$}.

\item $\widehat{\mathrm{VaR}}_\alpha$: Estimate of VaR at level $\alpha$

\item{$\hat{q}_{\alpha}$: Sample quantile based on $F_{n}$}

\item{$\hat{q}_{\alpha}(\hat{w})$: Sample quantile indexed by estimated weights}

\item{$\mathrm{VaR}_{\alpha}(L) = \inf \left\{ \ell \in \mathbb{R} : F_L(\ell) \ge \alpha \right\}$: Value at Risk}

\item{$A \equiv \{-\hat{w}^{\top} R < t\}$: Sub-level set generated by threshold $t$}

\item{$B \equiv \{-w_{0}^{\top} R < q_{0}\}$: Sub-level set generated by reference quantile $q_{0}$}

\item{$I_{A}, I_{B}$: Indicator functions of sets $A$ and $B$}

\item{$A \Delta B$: Symmetric difference between sets $A$ and $B$}
\item{$L_1(P)=\left\{f:\int_{\Omega}|f(\omega)|\,dP(\omega)<\infty\right\}$}
\item{$\| \cdot \|$: Euclidean norm}

\item{$\mathbb{E}\|R\|$: First moment of the return vector}

\item{$D_{1}, D_{2}, D_{3}$: Components of the Q-Q orthogonality decomposition}

\item{$R_{n}$: Bahadur remainder term}

\end{itemize}
\section{Introduction} \label{Intro}

Quantile estimation is central in many applications. In finance, portfolio choice is classically formulated through weighted returns in the mean--variance framework of  \cite{markowitz1952}, while Value-at-Risk (VaR) \ \cite{jorion2006}  is a tail-dependent risk measure used to quantify potential losses at a prescribed confidence level $\alpha$. Let $L=-w^\top R$ denote the portfolio loss associated with return vector $R$ and weight vector $w$. Then $\mathrm{VaR}_\alpha(L)$ is the threshold exceeded only with probability $1-\alpha$. Closely related is Conditional Value-at-Risk (CVaR) \cite{RockafellarUryasev2000}, defined as the expected loss beyond the VaR threshold.

Financial returns are well known to exhibit heavy-tailed behavior, making Gaussian approximations unreliable for tail-risk estimation. This motivates the study of stable estimation procedures for VaR under heavy-tailed models, including the multivariate $t$-distribution \cite{Peiro1994}. Formally, for $\alpha \in \{0.90,0.95,0.99\}$, $\mathrm{VaR}_{\alpha}(L)=\inf\{\ell\in\mathbb{R}:F_L(\ell)\geq \alpha\}$, where $F_L$ denotes the distribution function of $L$. Thus, $\mathrm{VaR}_{\alpha}(L)$ is the $\alpha$-quantile $q_\alpha$ of the loss distribution.

A classical starting point for sample quantile asymptotics is  representation by \cite{bahadur1966}. For a fixed distribution, the sample quantile satisfies $\hat q_\alpha-q_\alpha=\{\alpha-F_n(q_\alpha)\}/f(q_\alpha)+R_n$, where $f(q_\alpha)$ is the density evaluated at the population quantile and $R_n$ is a smaller-order remainder term. By viewing quantile estimation through the empirical process generated by Bernoulli indicators $\mathbf{1}(X_i\leq x)$, Bahadur showed that the leading fluctuation of the sample quantile is governed by the empirical distribution function evaluated at $q_\alpha$, while the remainder is asymptotically negligible. This representation forms the basis for much of modern quantile asymptotics and asymptotic statistical analysis \cite{vandervaart1998}.

Modern empirical-process theory provides a natural framework for studying the stability of projected quantiles through classes of half-spaces and indicator functions indexed by projection directions and thresholds \cite{pollard1984}, \cite{vandervaartwellner2023},\cite{dudley1999}. In particular, the class of half-space indicators generated by pairs $(w,t)$ forms a VC class, permitting Glivenko--Cantelli uniform convergence of empirical and population probabilities. These ideas underlie asymptotic analyses of quantiles, empirical measures, and stochastic fluctuations indexed by function classes.

In portfolio applications, however, the loss distribution is itself indexed by the projection direction $w$. As the weights vary, the projected loss $L=-w^\top R$ induces a continuum of projected distributions and corresponding quantiles. Consequently, the empirical quantity $\hat q_\alpha(\hat w)$ contains several sources of variation simultaneously: perturbations in the projection direction, movement of the quantile threshold, and residual empirical fluctuation. The half-space approach yields useful stability bounds through probabilities of symmetric differences $P(A\Delta B)$ between projected loss regions. However, these bounds aggregate the effects of perturbations in the projection direction and changes in the quantile threshold into a single discrepancy measure. More fundamentally, uniform convergence over a continuum of projection directions and thresholds imposes a global empirical-process requirement on what is intrinsically a local quantile-stability problem.

We propose a Q-Q orthogonality formulation to analyze local stability of sample quantiles 
 in the neighborhood of the reference quantile to separate the projection direction effect, the sample quantile threshold effect, and the remainder. Building on Bahadur's representation and the empirical-process framework for half-spaces, we derive a decomposition of the total quantile error into components associated with perturbations in the projection direction, empirical quantile fluctuation with the direction held fixed, and the Bahadur-type remainder. The resulting formulation refines the aggregate half-space bound into interpretable components while retaining the classical asymptotic structure under heavy-tailed multivariate $t$-models.
\section{Our Contribution} \label{Contribution}

Typically, the returns $R \equiv(R_1,R_2,\dots,R_p)$ are modeled by symmetric heavy-tailed families of distributions, which complicates the estimation of VaR. As discussed in the Introduction (Section~\ref{Intro}), VaR is a quantile of the distribution of the loss $L$, which depends on weights estimated from the sample of returns in the  framework of \cite{markowitz1952}. Typically, the sample quantile is obtained from a fixed distribution satisfying $F_X(x_\alpha)=\alpha$. In our setting, however, the sample quantile $\hat q_\alpha(\hat w)$ is indexed by an estimated weight vector and therefore induces additional variability.

Empirical Process Theory provides a natural framework for deriving bounds and evaluating the stability of estimated sample quantiles \cite{vapnik1971uniform,pollard1984,dudley1999}. Using half-spaces and symmetric differences of half-space sub-level sets, $A=\{w^\top R<t\}$ and $B=\{w_0^\top R<q_0\}$, we derive bounds on the symmetric difference probability $P(A\triangle B)$. Under a local Lipschitz condition near the reference quantile, these bounds yield quantile stability by ensuring that small perturbations in $(w,t)$ correspond to small changes in half-space probabilities.  However, the resulting bound is aggregated in nature. Even when sharpened under multivariate $t$-models, it does not distinguish variability arising from changes in the projection direction $w$ from variability arising through shifts in the quantile threshold $t$. Moreover, the empirical-process machinery underlying Glivenko--Cantelli uniform convergence is most naturally formulated under i.i.d. sampling assumptions, whereas financial returns are often serially dependent and only approximately stationary in practice. More fundamentally, bounding over a continuum of half-spaces transforms what is intrinsically a local quantile-stability phenomenon into a global uniform-convergence problem.

Our object of interest is the fluctuation of the sample quantile $\hat q_\alpha(\hat w)$ in a neighborhood of the reference quantile $q_\alpha$. Viewing the problem locally leads naturally to the Q-Q orthogonality decomposition into the components $D_1$, $D_2$, and $D_3$, which separately capture variability induced by changes in the projection direction, empirical quantile fluctuation with the direction held fixed, and the higher-order Bahadur remainder, respectively.

\section{Preliminaries and Assumptions} \label{Prelims}

In studying the behavior of the sample quantile near its population counterpart, consider a projection vector $w\in\mathbb R^p$ and the scalar random variable $L=-w^\top R$. For any threshold $t\in\mathbb R$, the sublevel set $\{-w^\top R<t\}$ corresponds to the interval $(-\infty,t)$ with probability $F_L(t)=P(-w^\top R<t)$, where $F_L(\cdot)$ denotes the distribution function of the loss $L$. Let $q_\alpha$ denote the $\alpha^{\text{th}}$ population quantile. A quantile may be viewed as a first passage time of the process $F_L(t)=\alpha$, where the solution $t$ is the crossing point $t=q_\alpha$. Its empirical counterpart $\hat q_\alpha$ is defined analogously through the empirical distribution.

Our interest lies in analyzing the stability of $t=\hat q_\alpha$ in a neighborhood of $q_\alpha$. For a fixed projection direction $w$, we regard $t$ as a local threshold fluctuating near $q_\alpha(w)$. This enables us to study the behavior of the sample quantile through perturbations of $F(w,t)$ around $F(w,q_\alpha(w))=\alpha$. Variations in $t$ correspond to pointwise movements along the real line, and the local behavior is governed by the density $f_L(q_\alpha(w))$ bounded away from $0$.

This univariate reduction, however, is only valid pointwise in $w$. In our setting the weights are estimated, so each realization $\hat w$ induces a distinct projection $L_{\hat w}=-\hat w^\top R$ and a corresponding quantile $q_\alpha(\hat w)$. Accordingly, the threshold $t$ is implicitly tied to the projection direction $w$ through the relation $F_L(t)=\alpha$. As $w$ varies, both the induced distribution  of $L$ and the associated quantile $q_\alpha(w)$ vary. Consequently, the collection of pairs $(w,t)$ indexes a family of sublevel sets across different projections rather than intervals arising from a single univariate distribution.

Stability of $\hat q_\alpha(\hat w)$ therefore cannot be reduced to interval probabilities of a single random variable. Instead, it requires analyzing both perturbations in $t$ for fixed $w$ and shifts in the distribution induced by variation in $w$. These effects motivate the Q-Q orthogonality decomposition. To formalize the asymptotic framework developed in the sequel, we impose the following assumptions on the return process and projected loss distribution.

\begin{itemize}
\item Let $R\in\mathbb R^p$ denote returns with distribution $P$.
\item $R_1,\dots,R_n$ are independent copies of a random vector $R\in\mathbb R^p$ with common distribution $P$.
\item $R\sim t_\nu(\mu,\Sigma)$ with $\nu>2$.
\item $\mathbb E\|R\|<\infty$.
\item The density of $w^\top R$ is bounded near $q_\alpha$.
\end{itemize}

Furthermore, define $F(w,q)$ as the probability that the projected loss $-w^\top R$ is less than or equal to $q$. For fixed $w$, this is simply the cumulative distribution function of the projected loss $L_w=-w^\top R$ evaluated at the threshold $q$. Although the population symmetric-difference bound requires only $\nu>1$, we impose $\nu>2$ in the multivariate $t_\nu$ framework developed here. This ensures the existence of projected first and second moments and allows $\Sigma$, up to the standard scaling, to be interpreted as the covariance matrix.

\subsection{Empirical Process Foundations}

Empirical process theory provides the natural framework for quantifying the difference between empirical quantities and their population counterparts. In the present setting, the relevant fluctuations arise from indexed half-space probabilities and their associated empirical processes. We follow the standard empirical-process notation and viewpoint developed in \cite{pollard1984},  \cite{dudley1999},  \cite{vandervaartwellner2023}, and  \cite{vandervaart1998}. Fundamentally, the remit of empirical process theory is to study the stochastic fluctuations of $(P_n-P)f$ over classes of functions $f\in\mathcal F$.

In quantile analysis, the relevant class consists of indicator functions of half-spaces of the form $\{-w^\top R<t\}$. For each pair $(w,t)$, let $g_{w,t}$ denote the indicator of the half-space $\{-w^\top r<t\}$. Define $F_n(w,t)$ as the empirical half-space probability and $F(w,t)$ as the corresponding population probability. The empirical-process fluctuation is then $F_n(w,t)-F(w,t)$, equivalently $(P_n-P)g_{w,t}$.

Since both $w$ and $t$ vary in the quantile problem, a single fixed half-space does not capture the full geometry. The indexed family of half-spaces supplies useful empirical-process scaffolding, while the population perturbation analysis below keeps track of how changes in $w$ and $t$ affect the corresponding probabilities. To formalize this setting, consider the indexed class of indicator functions $\mathcal F=\{\mathbf 1\{-w^\top r<t\}:w\in\mathbb R^p,\ t\in\mathbb R\}$. Since these are indicator functions of half-spaces, the class is uniformly bounded and forms a VC class, placing it within the standard Glivenko--Cantelli framework for empirical measures.

\subsection{Pointwise Law of Large Numbers}

Let $\{\mathbf{R_i}\}_{i=1}^n$ denote a strictly stationary and ergodic sequence of returns. For each fixed pair $(w,t)$, the variables $\mathbf 1\{-w^\top R_i\le t\}$ form a stationary ergodic sequence with mean $F(t;w)=P(-w^\top R\le t)$. Invoking the ergodic theorem \cite{billingsley1995}, $F_n(t;w)\to F(t;w)$ almost surely for each fixed $(w,t)$. In the special case where the observations are i.i.d., this reduces to the classical Strong Law of Large Numbers. However, such pointwise convergence does not by itself give a global law over the continuum of pairs $(w,t)$ since the supremum over an indexed class cannot, in general be interchanged with the limit.

\subsection{Glivenko--Cantelli Background and Local Quantile Stability}

The classical Glivenko--Cantelli framework provides a method for showing that empirical half-space probabilities approximate their population counterparts uniformly over an indexed class. In the i.i.d. setting used for the formal empirical-process comparison, the class $\mathcal F=\{\mathbf 1\{-w^\top r\le t\}:w\in\mathbb R^p,\ t\in\mathbb R\}$ is a VC class and hence is Glivenko--Cantelli. Therefore, $\sup_{(w,t)}|F_n(t;w)-F(t;w)|\to0$ almost surely; see  \cite{pollard1984}.

For weakly dependent financial returns, the corresponding global uniform law requires additional dependence conditions, such as mixing assumptions, and does not follow from ergodicity alone; see Doukhan (1994) and Andrews (1988). Hence we apply the Glivenko--Cantelli framework for establishing uniform approximation. The quantile problem studied here is local: we analyze the behavior of $\hat q_\alpha(\hat w)$ near the reference quantile $q_\alpha$, as 
 the projection direction fluctuates. The half-space approach provides a mechanism for bounding symmetric difference of sub-level sets, while the Q-Q orthogonality decomposition separates the local effects of ($w$, $t$) and the Bahadur-type remainder.
\section{Main Result and Organization of the Paper}
\label{Main_Results}

We now state a stability result that summarizes the main mechanism of the paper. The full theorem, including the local oscillation and uniform Bahadur remainder conditions, is deferred until Section~\ref{Main_Theorem}, after the empirical-process and Q-Q decomposition machinery has been developed. The result establishes stability of the estimated Value-at-Risk when both the projection direction and the empirical quantile are estimated from financial returns following heavy-tailed probability laws.

\begin{theorem}[Stability statement]
\label{EarlyStabilityStatement}

Let $w_0\in\mathbb R^p$ be a fixed reference weight vector, and let $q_0:=q_\alpha(w_0)$ denote the population $\alpha$-quantile of the projected loss $L_0:=-w_0^\top R$. Let $\hat w$ be an estimator of $w_0$, and let $\hat q_\alpha(\hat w)$ denote the empirical $\alpha$-quantile associated with the perturbed direction $\hat w$.

Suppose, for the formal empirical-process argument, that $R_1,\ldots,R_n$ are i.i.d.\ random vectors with common distribution $P$; the indexed half-space class satisfies Glivenko--Cantelli uniform convergence; the projected density remains positive and continuous near the target quantile; the quantile function $w\mapsto q_\alpha(w)$ is continuous near $w_0$; a Bahadur representation holds at the perturbed direction $\hat w$; and $\hat w\xrightarrow{p}w_0$.

Then the estimated quantile admits the decomposition
$
\hat q_\alpha(\hat w)-q_\alpha(w_0)
=
D_1+D_2+D_3,
$
where
$
D_1=q_\alpha(\hat w)-q_\alpha(w_0),
$
$
D_2=
\{\alpha-F_n(\hat w,q_\alpha(\hat w))\}/f_{\hat w}(q_\alpha(\hat w)),
$
and
$
D_3=R_n,
$
with $R_n$ denoting the Bahadur remainder term. Moreover,
$
\hat q_\alpha(\hat w)-q_\alpha(w_0)\xrightarrow{p}0.
$
Equivalently,
$
\widehat{\mathrm{VaR}}_\alpha-\mathrm{VaR}_\alpha\xrightarrow{p}0.
$

\end{theorem}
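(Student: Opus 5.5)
The plan has two stages: first establish the decomposition as an algebraic identity, then show each of the three terms is $o_p(1)$. For the identity, I would add and subtract $q_\alpha(\hat w)$:
\[
\hat q_\alpha(\hat w)-q_\alpha(w_0)=\{q_\alpha(\hat w)-q_\alpha(w_0)\}+\{\hat q_\alpha(\hat w)-q_\alpha(\hat w)\}.
\]
The first bracket is $D_1$ by definition. For the second bracket, I invoke the assumed Bahadur representation at the perturbed direction $\hat w$:
\[
\hat q_\alpha(\hat w)-q_\alpha(\hat w)=\{\alpha-F_n(\hat w,q_\alpha(\hat w))\}/f_{\hat w}(q_\alpha(\hat w))+R_n .
\]
This equals $D_2+D_3$ once $R_n$ is \emph{defined} as the difference between the left side and the linear term. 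So the decomposition holds exactly, and all of the content lies in showing each $D_j\xrightarrow{p}0$.

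For $D_1$, I would use the continuous mapping theorem. Since $w\mapsto q_\alpha(w)$ is continuous at $w_0$ and $\hat w\xrightarrow{p}w_0$, it follows that $q_\alpha(\hat w)\xrightarrow{p}q_\alpha(w_0)$. For $D_3$, the hypothesis that the Bahadur representation holds at $\hat w$ is read as $R_n=o_p(1)$; in fact the standard statement gives $R_n=o_p(n^{-1/2})$. I would state explicitly that this is the interpretation being used.

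For $D_2$, there are two sub-steps.

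1. \emph{Numerator.} Because the projected distribution is continuous, $F(\hat w,q_\alpha(\hat w))=\alpha$. Hence
\[
|\alpha-F_n(\hat w,q_\alpha(\hat w))|=|(F-F_n)(\hat w,q_\alpha(\hat w))|\le \sup_{(w,t)}|F_n(w,t)-F(w,t)|\to 0
\]
almost surely, by the Glivenko--Cantelli property of the VC class $\mathcal F$ of half-space indicators. The uniformity is what allows the data-dependent index $(\hat w,q_\alpha(\hat w))$.

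2. \emph{Denominator.} I need $f_{\hat w}(q_\alpha(\hat w))$ to be bounded away from zero with probability tending to one. I would argue that $(w,t)\mapsto f_w(t)$ is jointly continuous near $(w_0,q_0)$. Under the $t_\nu(\mu,\Sigma)$ model this is explicit: $-w^\top R$ is a location--scale $t_\nu$ with location $-w^\top\mu$ and scale $\sqrt{w^\top\Sigma w}$, so the density is continuous in $(w,t)$ whenever $w^\top\Sigma w>0$. Combined with $q_\alpha(\hat w)\xrightarrow{p}q_0$ and $f_{w_0}(q_0)>0$, the continuous mapping theorem gives $f_{\hat w}(q_\alpha(\hat w))\xrightarrow{p}f_{w_0}(q_0)>0$.

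A ratio of an $o_p(1)$ numerator and a denominator converging to a positive constant is $o_p(1)$. Summing the three terms gives the consistency claim. The VaR restatement is immediate, since $\mathrm{VaR}_\alpha=q_\alpha(w_0)$ and $\widehat{\mathrm{VaR}}_\alpha=\hat q_\alpha(\hat w)$.

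I expect the main obstacle to be the denominator of $D_2$ together with the meaning of the Bahadur remainder at a random direction. The stated hypothesis only gives positivity and continuity of the density in $t$ near the quantile, but the argument needs joint continuity in $(w,t)$. For the denominator, I would first fall back on the explicit multivariate $t$ form. For the remainder, if ``Bahadur representation at $\hat w$'' is not taken to mean $R_n=o_p(1)$, the fallback is to avoid it entirely. One shows $\hat q_\alpha(\hat w)-q_\alpha(\hat w)\xrightarrow{p}0$ directly: on the event $\sup|F_n-F|<\eta$, the empirical quantile must lie in $[q_\alpha(\hat w)-\varepsilon,\,q_\alpha(\hat w)+\varepsilon]$, because $F(\hat w,\cdot)$ increases by at least $c\varepsilon$ across that interval by the local density lower bound. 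Then $D_3=(\hat q_\alpha(\hat w)-q_\alpha(\hat w))-D_2$ is $o_p(1)$ as a difference of two $o_p(1)$ terms.
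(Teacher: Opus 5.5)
Your proposal is correct and follows the paper's route: the add-and-subtract identity of the Q-Q decomposition theorem with the Bahadur expansion at $\hat w$, then $D_1\to 0$ by continuity of $w\mapsto q_\alpha(w)$ at $w_0$, and $D_2\to 0$ by bounding the numerator with the Glivenko--Cantelli supremum while $f_{\hat w}(q_\alpha(\hat w))\to f_{w_0}(q_0)>0$ in probability. The only divergence is $D_3$: the paper derives $D_3=O_P(n^{-3/4}\log n)$ from the local oscillation hypothesis of its main stability theorem, which is not among this statement's hypotheses, whereas your $o_p(1)$ reading of the remainder, or your direct sandwich argument for $\hat q_\alpha(\hat w)-q_\alpha(\hat w)$, is all that consistency requires and matches the listed assumptions more closely.
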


The theorem shows that stability of the estimated quantile arises from three distinct mechanisms operating simultaneously: Lipschitz continuity of perturbed half-spaces, uniform empirical convergence of the indexed empirical distribution function, and local Bahadur linearization of the empirical quantile. The Q-Q orthogonality decomposition isolates these contributions into the components $D_1$, $D_2$, and $D_3$.

The remainder of the paper builds the scaffolding on which the theorem rests. Section~\ref{Half_Spaces} develops Lipschitz continuity bounds for perturbed half-spaces through symmetric-difference control, while Section~\ref{Mult_t} restricts our approaches to variable projections under multivariate $t_\nu$ distribution models. Section~\ref{GC} studies Glivenko--Cantelli uniform convergence and empirical quantile stability for the indexed half-space class. Section~\ref{QQ} introduces the Q-Q orthogonality decomposition and separates variability induced by perturbations in the projection direction, empirical quantile fluctuation, and the Bahadur remainder. Finally, Section~\ref{Main_Theorem} combines these ingredients to establish the proof of the main stability theorem.  

\section{Symmetric Difference Bounds and Half-Spaces Theory}
\label{Half_Spaces}

To establish the asymptotic behavior of quantile estimators, we study the sensitivity of the probability measure $P(A)=P(-w^\top R<t)$ to changes in the parameters $(w,t)$. Geometrically, the set $A=\{-w^\top R<t\}$ defines a perturbed half-space, while the reference half-space is $B=\{-w_0^\top R<q_0\}$, where $q_0=q_\alpha(w_0)$. This sensitivity is quantified by the probability $P(A\triangle B)$ of the symmetric difference between the two half-spaces $A$ and $B$. Measuring this probability mass describes the local sensitivity of the associated tail events to changes in the projection direction and quantile threshold. The following lemma formalizes this sensitivity through symmetric differences of half-space indicator functions.

\begin{lemma}[Probability of Symmetric Differences, Lipschitz, and Bounds]
\label{HalfSpaceLipschitz}
Let $(w,\hat w)\in\mathbb R^p$ and $(q,t)\in\mathbb R$. Define the half-spaces $A=\{r:-w^\top r\le t\}$ and $B=\{r:-\hat w^\top r\le q\}$. Let $R$ be a random vector with $\mathbb E\|R\|<\infty$, and define $L=-w^\top R$. Assume the distribution of $L$ is locally Lipschitz at $q$, i.e., there exists $C>0$ such that for sufficiently small $\epsilon>0$, $P(|L-q|\le\epsilon)\le C\epsilon$. Then for sufficiently small $\|w-\hat w\|$ and $|t-q|$, there exists $C'>0$ such that
\begin{equation} \label{AgBnd}
P(A\Delta B)\le C'\sqrt{\|w-\hat w\|\mathbb E\|R\|+|t-q|}.
\end{equation}
\end{lemma}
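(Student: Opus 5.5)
The plan is to split the symmetric difference through an intermediate half-space and control each piece separately. Introduce $C=\{r:-w^\top r\le q\}$, which has the same direction as $A$ and the same threshold as $B$. Then $A\Delta B\subseteq (A\Delta C)\cup(C\Delta B)$, so
\[
P(A\Delta B)\le P(A\Delta C)+P(C\Delta B).
\]

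The first piece involves only a change of threshold. Indeed $A\Delta C=\{\min(t,q)<L\le\max(t,q)\}\subseteq\{|L-q|\le|t-q|\}$. The local Lipschitz hypothesis therefore gives $P(A\Delta C)\le C|t-q|$ once $|t-q|$ is small enough.

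The second piece involves only a change of direction, and I expect it to be the main obstacle. On $C\Delta B$ the two losses $-w^\top R$ and $-\hat w^\top R$ lie on opposite sides of $q$. Their difference is $(\hat w-w)^\top R$, so for any $\delta>0$,
\[
C\Delta B\subseteq\{|L-q|\le\delta\}\cup\{|(w-\hat w)^\top R|>\delta\}.
\]
Apply the Lipschitz condition to the first event and Markov's inequality with Cauchy--Schwarz to the second:
\[
P(C\Delta B)\le C\delta+\frac{\|w-\hat w\|\,\mathbb E\|R\|}{\delta}.
\]
Optimizing with $\delta=\sqrt{\|w-\hat w\|\mathbb E\|R\|}$ gives $P(C\Delta B)\le (C+1)\sqrt{\|w-\hat w\|\mathbb E\|R\|}$. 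This choice of $\delta$ is admissible only when $\delta$ lies in the range where the Lipschitz bound applies. That is exactly why the statement assumes $\|w-\hat w\|$ is sufficiently small. If $\mathbb E\|R\|=0$, take $\delta\downarrow0$ instead.

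To finish, combine the two pieces. For small arguments $x\le\sqrt{x}$ whenever $x\le1$, so $|t-q|\le\sqrt{|t-q|}$. Also $\sqrt a+\sqrt b\le\sqrt2\,\sqrt{a+b}$. Together these give
\[
P(A\Delta B)\le C\sqrt{|t-q|}+(C+1)\sqrt{\|w-\hat w\|\mathbb E\|R\|}\le C'\sqrt{\|w-\hat w\|\mathbb E\|R\|+|t-q|}
\]
with $C'=\sqrt2\,(C+1)$.

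The only delicate bookkeeping is checking that the Lipschitz hypothesis, stated at $q$ for the law of $L=-w^\top R$, is used with the correct direction in both pieces. In the second piece we condition on $L$ being near $q$, which is the given hypothesis, so no separate assumption on $-\hat w^\top R$ is needed. The event where $-\hat w^\top R$ is near $q$ is absorbed by the Markov term.
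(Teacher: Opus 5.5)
Your proof is correct, but it takes a different route from the paper's. The paper uses no intermediate half-space. It shows in one step that $A\triangle B\subseteq\{|L-q|\le\eta\}$, where $\eta=\|w-\hat w\|\,\|R\|+|t-q|$ is a random radius. It then splits $P(|L-q|\le\eta)\le P(|L-q|\le\epsilon)+P(\eta>\epsilon)$, applies the Lipschitz bound and Markov's inequality to $\eta$ as a whole, and takes $\epsilon=\sqrt{\mathbb E\eta}$. This gives the stated bound directly, with $C'=C+1$ and without your closing steps $x\le\sqrt x$ and $\sqrt a+\sqrt b\le\sqrt2\sqrt{a+b}$.

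Inserting $\{r:-w^\top r\le q\}$ costs you a slightly larger constant, $\sqrt2\,(C+1)$, and a final weakening step. In exchange you get a sharper intermediate estimate, $P(A\triangle B)\le C|t-q|+(C+1)\sqrt{\|w-\hat w\|\,\mathbb E\|R\|}$. Here the threshold perturbation enters linearly, and the square-root loss comes only from the Markov step forced by the random direction perturbation. The paper's one-step argument also sends the deterministic term $|t-q|$ through Markov, which is why its ``varying quantile with fixed weights'' remark only gives $C'\sqrt{|t-q_\alpha|}$. Your argument shows this is an artifact: for parallel half-spaces the bound is $C|t-q_\alpha|$. It also partially separates the direction and threshold effects that the paper says the aggregate bound merges.

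One cosmetic point: you use $C$ both for the Lipschitz constant and for the intermediate set, so rename the set.
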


\begin{proof}
Let $L=-w^\top R$ and $\hat L=-\hat w^\top R$. By Lemma~\ref{lem:appendix_SymDiff} in Appendix~A, $P(A\triangle B)=\mathbb E|1_A(R)-1_B(R)|=\|1_A-1_B\|_{L_1(P)}$. The problem therefore reduces to bounding the difference between the perturbed and reference half-space indicators in the $L_1(P)$ metric.

If $r\in A\triangle B$, then the half-space indicators disagree. Hence the difference can occur only when the reference loss lies within the perturbation induced by the weight and threshold changes, implying that $|-w^\top r-q|\le |(\hat w-w)^\top r|+|t-q|$. By Cauchy--Schwarz, $|(\hat w-w)^\top r|\le \|w-\hat w\|\|r\|$. Letting $\eta=\|w-\hat w\|\|R\|+|t-q|$, we obtain $A\Delta B\subseteq\{|L-q|\le\eta\}$, so $P(A\Delta B)\le P(|L-q|\le\eta)$.

The local Lipschitz condition applies to deterministic neighborhoods near $q$, whereas the interval width $\eta$ is random because it depends on the return vector $R$. To account for this randomness, fix $\epsilon>0$ and split according to whether the perturbation width exceeds this deterministic threshold. This gives $P(|L-q|\le\eta)\le P(|L-q|\le\epsilon)+P(\eta>\epsilon)$. The first term is handled by local Lipschitz continuity, so $P(|L-q|\le\epsilon)\le C\epsilon$ for sufficiently small $\epsilon$. The second term is bounded by Markov's inequality, $P(\eta>\epsilon)\le \mathbb E\eta/\epsilon$. Since $\mathbb E\eta\le \|w-\hat w\|\mathbb E\|R\|+|t-q|$, it follows that $P(A\triangle B)\le C\epsilon+\{\|w-\hat w\|\mathbb E\|R\|+|t-q|\}/\epsilon$. Choosing $\epsilon=\{\|w-\hat w\|\mathbb E\|R\|+|t-q|\}^{1/2}$ yields the claimed bound, up to a change in the constant.

Thus, for $\|w-\hat w\|\mathbb E\|R\|+|t-q|$ sufficiently small, $P(A\Delta B)\le C'\sqrt{\|w-\hat w\|\mathbb E\|R\|+|t-q|}$, where $C'$ absorbs the local Lipschitz and Markov constants.
\end{proof}

\subsection{Interpreting Symmetric Difference Bounds}

By Lemma~\ref{HalfSpaceLipschitz}, the probability of the symmetric difference between the sets $A$ and $B$ satisfies $\|1_A-1_B\|_{L_1(P)}=P(A\triangle B)$. Consequently, inequality~\eqref{AgBnd} provides an aggregate bound on deviations through symmetric differences of half-spaces. The bound is useful because it shows that small perturbations in the projection direction and the threshold induce only small changes in half-space probability. However, it does not distinguish variability arising from changes in the weights from variability arising through shifts in the quantile.
Perturbing $w$ changes the projected loss $L(w)=-w^\top R$, while perturbing $t$ changes the threshold along a fixed projected direction. The symmetric-difference bound merges these two effects into a single probability mass. We therefore use it as a stability statement, and then refine it through the Q-Q orthogonality decomposition, which separates the total variation into the components $D_1$, $D_2$, and $D_3$. Thus, Q-Q orthogonality is not an alternative to the half-space framework, but its refinement: the former establishes stability, while the latter resolves its sources.

The bound in $L_1(P)$ admits the following interpretation. For the half-spaces $A=\{r\in\mathbb R^p:-\hat w^\top r<t\}$ and $B=\{r\in\mathbb R^p:-w^\top r<q_\alpha\}$, the indicator difference satisfies $\|1_A-1_B\|_{L_1(P)}=P(A\triangle B)$. By equation~\eqref{AgBnd}, $\|1_A-1_B\|_{L_1(P)}\le C'\sqrt{\|\hat w-w\|\mathbb E\|R\|+|t-q_\alpha|}$. The inequality shows that the two half-spaces are close in the $L_1(P)$ metric whenever the estimated direction $\hat w$ is close to $w$ and the threshold $t$ is close to the population quantile $q_\alpha$. In words, the symmetric difference $A\triangle B=(A\setminus B)\cup(B\setminus A)$ carries a small amount of probability mass. Since $A$ and $B$ are sublevel sets of the loss variable induced by the projection $L(w)=-w^\top R$, the bound asserts that the probability mass near the quantile level $q_\alpha$ is stable under perturbations in both the projection direction and the threshold.

\paragraph{Varying weight with fixed quantile.}
If the threshold is fixed at the population value, that is, if $t=q_\alpha$, then the term $|t-q_\alpha|$ disappears and the bound reduces to $\|\mathbf 1_{\{-\hat w^\top r<q_\alpha\}}-\mathbf 1_{\{-w^\top r<q_\alpha\}}\|_{L_1(P)}\le C'\sqrt{\|\hat w-w\|\mathbb E\|R\|}$. Fixing $t=q_\alpha$ therefore isolates the effect of perturbing the weight vector while keeping the quantile level unchanged. The bound measures how much probability mass shifts when the loss variable is changed from $-w^\top R$ to $-\hat w^\top R$ at the same reference threshold. In this sense, $\|\hat w-w\|$ governs the sensitivity of the sublevel set, and hence that of the quantile, to changes in the projection direction alone.

\paragraph{Varying quantile with fixed weights.}
If the weight vector is fixed at $\hat w=w$, then the bound becomes $\|\mathbf 1_{\{-w^\top r<t\}}-\mathbf 1_{\{-w^\top r<q_\alpha\}}\|_{L_1(P)}\le C'\sqrt{|t-q_\alpha|}$. This term measures the effect of shifting the threshold while holding the projection direction fixed. In this case the two half-spaces are parallel, and their symmetric difference is simply the set $\{r\in\mathbb R^p:q_\alpha\le -w^\top r<t\}\cup\{r\in\mathbb R^p:t\le -w^\top r<q_\alpha\}$. Thus, $|t-q_\alpha|$ captures pure quantile displacement along a fixed projected direction.

\paragraph{Jointly varying weight vector and quantile.}
When both the weight vector and the threshold are varied simultaneously, the bound is $\|\mathbf 1_A-\mathbf 1_B\|_{L_1(P)}\le C'\sqrt{\|\hat w-w\|\mathbb E\|R\|+|t-q_\alpha|}$. Therefore, the bound is controlled by the square root of the combined effects. This is sufficient for continuity and stability, since it guarantees that small perturbations in either component produce small perturbations in the induced half-space probability. Hence, it is a convenient tool for showing that the estimated Value-at-Risk remains stable in a neighborhood of the population quantile.

For this reason, inequality~\eqref{AgBnd} should be viewed as a stability statement rather than a full decomposition of variability. It ensures that the symmetric difference is small, but it does not identify the separate contributions arising from weight estimation and quantile estimation. This limitation motivates the Q-Q orthogonality decomposition in Section~\ref{QQ}, where these effects are separated and analyzed one at a time.

\paragraph{On the constants.}
Let $L=-w^\top R$, and suppose that the distribution function of $L$ is locally Lipschitz near $q_\alpha$, so that $P(|L-q_\alpha|\le u)\le \gamma u$ for all sufficiently small $u$, for some $\gamma>0$. In the proof, the random perturbation radius is $\eta=\|w-\hat w\|\|R\|+|t-q_\alpha|$, so that $\mathbb E[\eta]\le \|w-\hat w\|\mathbb E\|R\|+|t-q_\alpha|$. The local Lipschitz bound together with Markov's inequality gives $P(A\triangle B)\le \gamma\epsilon+\mathbb E[\eta]/\epsilon$. Choosing $\epsilon=\sqrt{\mathbb E[\eta]}$ yields $P(A\triangle B)\le C\sqrt{\mathbb E[\eta]}$, where $C>0$ absorbs the numerical constants arising from the local Lipschitz and Markov bounds. Consequently, $P(A\triangle B)\le C\sqrt{\|w-\hat w\|\mathbb E\|R\|+|t-q_\alpha|}$. If desired, the factor $\mathbb E\|R\|$ may also be absorbed into the constant $C$, yielding the equivalent form $P(A\triangle B)\le C\sqrt{\|w-\hat w\|+|t-q_\alpha|}$. In practice, $\mathbb E\|R\|$ may be replaced by the plug-in estimate $n^{-1}\sum_{i=1}^{n}\|R_i\|$.
\section{Multivariate $t_\nu$ Distribution and Variable Projections}
\label{Mult_t}

Let $R\in\mathbb R^p$ have law $P$. Throughout, we assume $R$ follows a multivariate Student $t_\nu$ distribution with location $\mu\in\mathbb R^p$, scatter matrix $\Sigma$, and degrees of freedom $\nu>2$. In particular, we may represent $R=\mu+\Sigma^{1/2}Z/\sqrt{S/\nu}$, where $Z\sim N_p(0,I_p)$, $S\sim\chi^2_\nu$, and $Z$ and $S$ are independent. Consequently, every one-dimensional projection $a^\top R$ is univariate $t_\nu$, with finite first moment and bounded density. This is precisely the regime we exploit, as it allows projected-density control without invoking rough bounds involving $\|R\|$.

For a measurable function $f$, we adopt the notation $Pf:=\int f\,dP$ and $P_nf:=n^{-1}\sum_{i=1}^n f(R_i)$, where $R_1,\dots,R_n$ are i.i.d.\ copies of $R$.

For $w\in\mathbb R^p$ and $t\in\mathbb R$, define the moving half-space indicator $g_{w,t}(r):=\mathbf 1\{-w^\top r<t\}$, $r\in\mathbb R^p$. Fix a reference pair $(w_0,q_0)$ and write $g_0:=g_{w_0,q_0}$. The associated sets are $A_{w,t}:=\{r:-w^\top r<t\}$ and $B:=\{r:-w_0^\top r<q_0\}$. The object of interest is $P|g_{w,t}-g_0|$, which is the population-level distance between indicator functions.

\subsection{Indicator Differences and Symmetric Difference}

We begin with the sub-level sets $A_{w,t}$ and $B_{w_0,q_0}$, while $g_{w,t}$ and $g_0$ are their corresponding indicator functions. In the following, all calculations are carried out pointwise in $r\in\mathbb R^p$.

\begin{lemma}[Indicator discrepancy equals symmetric difference]
For measurable sets $A,B\subseteq\mathbb R^p$, one has $|\mathbf 1_A-\mathbf 1_B|=\mathbf 1_{A\triangle B}$, where $A\triangle B:=(A\setminus B)\cup(B\setminus A)=(A\cap B^c)\cup(A^c\cap B)$. Consequently, $P|\mathbf 1_A-\mathbf 1_B|=P(A\triangle B)$. In particular, if $A_{w,t}:=\{r\in\mathbb R^p:-w^\top r\le t\}$, $B_{w_0,q_0}:=\{r\in\mathbb R^p:-w_0^\top r\le q_0\}$, $g_{w,t}:=\mathbf 1_{A_{w,t}}$, and $g_0:=\mathbf 1_{B_{w_0,q_0}}$, then $P|g_{w,t}-g_0|=P(A_{w,t}\triangle B_{w_0,q_0})$.
\end{lemma}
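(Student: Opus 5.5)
The plan is a pointwise verification followed by integration against $P$. Fix $r\in\mathbb R^p$. Since $\mathbf 1_A(r)$ and $\mathbf 1_B(r)$ each take values in $\{0,1\}$, there are exactly four cases for the pair $(\mathbf 1_A(r),\mathbf 1_B(r))$.
\begin{itemize}
\item If $(1,1)$ or $(0,0)$, then $r$ lies in both sets or in neither. Hence $r\notin A\triangle B$, and $|\mathbf 1_A(r)-\mathbf 1_B(r)|=0=\mathbf 1_{A\triangle B}(r)$.
\item If $(1,0)$, then $r\in A\cap B^c\subseteq A\triangle B$. So $|\mathbf 1_A(r)-\mathbf 1_B(r)|=1=\mathbf 1_{A\triangle B}(r)$.
\item If $(0,1)$, then $r\in A^c\cap B\subseteq A\triangle B$, and the same conclusion holds.
\end{itemize}
These cases exhaust all $r$, so $|\mathbf 1_A-\mathbf 1_B|=\mathbf 1_{A\triangle B}$ identically. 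The two set descriptions $(A\setminus B)\cup(B\setminus A)$ and $(A\cap B^c)\cup(A^c\cap B)$ agree simply because $A\setminus B=A\cap B^c$.

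For the integrated statement, observe that $A\triangle B$ is measurable, being built from $A,B$ by complements, intersections and a union. Its indicator is therefore a bounded measurable function. Integrating the pointwise identity against $P$ gives $P|\mathbf 1_A-\mathbf 1_B|=P\mathbf 1_{A\triangle B}=P(A\triangle B)$, where the last step uses the notation $Pf=\int f\,dP$.

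For the half-space specialization, note that $A_{w,t}$ and $B_{w_0,q_0}$ are preimages of the closed ray $(-\infty,t]$, respectively $(-\infty,q_0]$, under the continuous linear maps $r\mapsto -w^\top r$ and $r\mapsto -w_0^\top r$. They are therefore closed, hence Borel, sets. The general identity then applies directly with $g_{w,t}=\mathbf 1_{A_{w,t}}$ and $g_0=\mathbf 1_{B_{w_0,q_0}}$. The same argument works verbatim with strict inequalities, giving open half-spaces, so the paper's mixing of $<$ and $\le$ does not matter.

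There is no real obstacle here. The only point needing explicit mention is measurability of the half-spaces, so that $P(A\triangle B)$ is defined and the integral is legitimate.
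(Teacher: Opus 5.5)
Your proposal is correct and follows essentially the same route as the paper: a four-case pointwise check that $|\mathbf 1_A-\mathbf 1_B|=\mathbf 1_{A\triangle B}$, followed by integration against $P$. Your measurability remarks are also correct. The half-spaces are closed preimages of rays under continuous linear maps, or open ones in the strict-inequality case, and noting this only makes explicit what the paper leaves implicit.
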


\begin{proof}
Fix $r\in\mathbb R^p$. There are four possibilities. If $r\in A\cap B$, then both indicators equal $1$, so $|\mathbf 1_A(r)-\mathbf 1_B(r)|=0$, and $r\notin A\triangle B$. If $r\in A^c\cap B^c$, then both indicators equal $0$, so again $|\mathbf 1_A(r)-\mathbf 1_B(r)|=0$, and $r\notin A\triangle B$. If $r\in A\setminus B$, then $\mathbf 1_A(r)=1$ and $\mathbf 1_B(r)=0$, so $|\mathbf 1_A(r)-\mathbf 1_B(r)|=1$, and $r\in A\triangle B$. If $r\in B\setminus A$, then $\mathbf 1_A(r)=0$ and $\mathbf 1_B(r)=1$, so $|\mathbf 1_A(r)-\mathbf 1_B(r)|=1$, and again $r\in A\triangle B$. These cases cover all four possibilities. Hence $|\mathbf 1_A-\mathbf 1_B|=\mathbf 1_{A\triangle B}$ pointwise. Integrating both sides with respect to $P$ gives $P|\mathbf 1_A-\mathbf 1_B|=P(A\triangle B)$, which proves the result.
\end{proof}

We now use this identity to reduce the symmetric difference of two half-spaces to a scalar comparison near the reference quantile. Let $U:=-w_0^\top R$ and $V:=-(w-w_0)^\top R$. Then $-w^\top R=-w_0^\top R-(w-w_0)^\top R=U+V$. Thus the perturbed loss $-w^\top R$ is decomposed into the reference loss $U$ and the weight-perturbation term $V$.

Because $R\sim t_\nu(\mu,\Sigma)$, every linear projection of $R$ is univariate Student $t_\nu$. In particular, $U=-w_0^\top R\sim t_\nu(-w_0^\top\mu,w_0^\top\Sigma w_0)$, and $V=-(w-w_0)^\top R\sim t_\nu(-(w-w_0)^\top\mu,(w-w_0)^\top\Sigma(w-w_0))$. Therefore, for each fixed $w$ near $w_0$, the reference projection $U$ has a bounded continuous density $f_U$, and the perturbation projection $V$ has a finite first absolute moment whenever $\nu>2$. Since we assume $\nu>2$, both first and second moments of these projections exist whenever the corresponding scale is finite.

\subsection{Symmetric Difference near the Reference Quantile Hyperplane}

In this subsection, we show that the symmetric difference of the two half-spaces is controlled by a one-dimensional interval around the reference quantile hyperplane.

\begin{lemma}[Bounding the symmetric difference near $q_0$]
\label{lem:slab-inclusion}
Let $A_{w,t}:=\{r\in\mathbb R^p:-w^\top r\le t\}$ and $B_{w_0,q_0}:=\{r\in\mathbb R^p:-w_0^\top r\le q_0\}$. Define $U:=-w_0^\top R$ and $V:=-(w-w_0)^\top R$. Then, for every $w\in\mathbb R^p$ and $t\in\mathbb R$, $A_{w,t}\triangle B_{w_0,q_0}\subseteq\{|U-q_0|\le |V|+|t-q_0|\}$.
\end{lemma}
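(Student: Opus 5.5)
The argument is pointwise, just as for the indicator-discrepancy lemma. Fix $r\in\mathbb R^p$ and write $u:=-w_0^\top r$ and $v:=-(w-w_0)^\top r$, so that $-w^\top r=u+v$. Then $r\in A_{w,t}$ if and only if $u+v\le t$, and $r\in B_{w_0,q_0}$ if and only if $u\le q_0$. The symmetric difference splits as $(A_{w,t}\setminus B_{w_0,q_0})\cup(B_{w_0,q_0}\setminus A_{w,t})$. I will show that each piece forces $|u-q_0|\le |v|+|t-q_0|$, and then substitute $r=R$ to get the stated event inclusion.

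\emph{Case 1:} $r\in A_{w,t}\setminus B_{w_0,q_0}$. Here $u>q_0$ and $u+v\le t$. Then
\[
0<u-q_0\le t-v-q_0\le |t-q_0|+|v|,
\]
and since $u-q_0>0$ this gives $|u-q_0|\le|v|+|t-q_0|$.

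\emph{Case 2:} $r\in B_{w_0,q_0}\setminus A_{w,t}$. Here $u\le q_0$ and $u+v>t$. Then
\[
0\le q_0-u<q_0-t+v\le |t-q_0|+|v|,
\]
so again $|u-q_0|=q_0-u\le |v|+|t-q_0|$.

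Combining the two cases, every $r\in A_{w,t}\triangle B_{w_0,q_0}$ satisfies $|-w_0^\top r-q_0|\le|-(w-w_0)^\top r|+|t-q_0|$. Taking $r=R$ turns this into the event inclusion $\{R\in A_{w,t}\triangle B_{w_0,q_0}\}\subseteq\{|U-q_0|\le|V|+|t-q_0|\}$, which is the sense in which the lemma is stated.

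There is no real obstacle here. The only care needed is the sign bookkeeping in the two cases, and checking that the weak/strict inequalities are harmless: with the weak-inequality convention ``$\le$'' used in this lemma, the bounds come out non-strict, which is all that is required. No distributional assumption (the $t_\nu$ model, moments) is used at this stage. Those assumptions enter only later, when $P(|U-q_0|\le|V|+|t-q_0|)$ is bounded using the bounded density $f_U$ and the moment $\mathbb E|V|$.
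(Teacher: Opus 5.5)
Your proposal is correct and follows essentially the same route as the paper: a pointwise argument split into the two cases $A_{w,t}\setminus B_{w_0,q_0}$ and $B_{w_0,q_0}\setminus A_{w,t}$. The only difference is cosmetic. The paper writes out an add-and-subtract decomposition and discards the sign-definite term, while you rearrange $u+v\le t$ (respectively $u+v>t$) directly, which yields the same bound $|u-q_0|\le|v|+|t-q_0|$ in each case.
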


\begin{proof}
Take $r\in A_{w,t}\triangle B_{w_0,q_0}$. Then $r$ lies in exactly one of the two sets $A_{w,t}$ and $B_{w_0,q_0}$. Hence there are two cases.

\noindent\textbf{Case 1: $r\in A_{w,t}\setminus B_{w_0,q_0}$.} Then $-w^\top r\le t$ and $-w_0^\top r>q_0$. Since $U(r)=-w_0^\top r$, this gives $U(r)>q_0$, and hence $0<U(r)-q_0$. Add and subtract $-w^\top r$ and $t$ to obtain $U(r)-q_0=[(-w_0^\top r)-(-w^\top r)]+[(-w^\top r)-t]+(t-q_0)$. Since $(-w^\top r)-t\le0$, we may discard this nonpositive term and obtain $U(r)-q_0\le |(-w_0^\top r)-(-w^\top r)|+|t-q_0|$. Also, $|(-w_0^\top r)-(-w^\top r)|=|(w-w_0)^\top r|=|V(r)|$. Therefore $0<U(r)-q_0\le |V(r)|+|t-q_0|$, and hence $|U(r)-q_0|\le |V(r)|+|t-q_0|$.

\noindent\textbf{Case 2: $r\in B_{w_0,q_0}\setminus A_{w,t}$.} Then $-w_0^\top r\le q_0$ and $-w^\top r>t$. Since $U(r)=-w_0^\top r$, this gives $U(r)\le q_0$, and hence $0\le q_0-U(r)$. Add and subtract $-w^\top r$ and $t$ to obtain $q_0-U(r)=(q_0-t)+[t-(-w^\top r)]+[(-w^\top r)-(-w_0^\top r)]$. Since $t-(-w^\top r)<0$, we may discard this negative term and obtain $q_0-U(r)\le |q_0-t|+|(-w^\top r)-(-w_0^\top r)|$. Also, $|(-w^\top r)-(-w_0^\top r)|=|(w_0-w)^\top r|=|V(r)|$. Therefore $0\le q_0-U(r)\le |t-q_0|+|V(r)|$, and hence $|U(r)-q_0|\le |V(r)|+|t-q_0|$.

Since every $r\in A_{w,t}\triangle B_{w_0,q_0}$ falls into one of these two cases, we have shown that $r\in A_{w,t}\triangle B_{w_0,q_0}$ implies $|U(r)-q_0|\le |V(r)|+|t-q_0|$. This proves the inclusion.
\end{proof}

\subsection{Multivariate $t$-distribution and the Population Bound}

\begin{theorem}[Population symmetric difference bound under $t_\nu$]
\label{thm:population-wedge-fubini}
Assume $R\sim t_\nu(\mu,\Sigma)$ with $\nu>2$. Fix $w_0\neq0$, let $q_0$ be the $\alpha$-quantile of $U=-w_0^\top R$, and define $g_{w,t}(R)=\mathbf 1\{-w^\top R\le t\}$ and $g_0(R)=\mathbf 1\{-w_0^\top R\le q_0\}$. For $w$ near $w_0$ and $t$ near $q_0$, set $V=-(w-w_0)^\top R$. Suppose that there exist constants $\delta_0>0$ and $M<\infty$ such that, for all $0\le a\le\delta_0$ and all relevant values of $v$, $P(|U-q_0|\le a\mid V=v)\le2Ma$. Equivalently, whenever the conditional density exists, it is locally bounded near $q_0$ uniformly in $v$; in the degenerate case $w=w_0$, the condition is interpreted through the marginal density of $U$ near $q_0$. Then, for some finite constant $C>0$,
\begin{equation}
\label{PB_t_Dist}
P|g_{w,t}-g_0|
\le
C\left\{|t-q_0|+|(w-w_0)^\top\mu|+\left((w-w_0)^\top\Sigma(w-w_0)\right)^{1/2}\right\}.
\end{equation}
In particular, for fixed local $\mu$ and $\Sigma$, $P|g_{w,t}-g_0|\le C\{|t-q_0|+\|w-w_0\|\}$.
\end{theorem}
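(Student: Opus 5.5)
We plan to combine the indicator identity with the slab inclusion of Lemma~\ref{lem:slab-inclusion}, and then condition on $V$ so that the random slab width becomes deterministic. By the lemma on indicator discrepancy, $P|g_{w,t}-g_0|=P(A_{w,t}\triangle B_{w_0,q_0})$. By Lemma~\ref{lem:slab-inclusion} this is at most $P(|U-q_0|\le |V|+|t-q_0|)$.

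Unlike the proof of Lemma~\ref{HalfSpaceLipschitz}, we will not use a Markov split with an $\epsilon$ threshold, which is what produced the square root. Instead we condition on $V$ and use the assumed uniform conditional bound. Set $a(V):=|V|+|t-q_0|$. On the event $\{a(V)\le\delta_0\}$, the tower property and the hypothesis give
\[
P(|U-q_0|\le a(V),\,a(V)\le\delta_0)=\mathbb E\bigl[P(|U-q_0|\le a(V)\mid V)\mathbf 1\{a(V)\le\delta_0\}\bigr]\le 2M\,\mathbb E[a(V)].
\]
On the complementary event $\{a(V)>\delta_0\}$, we bound the probability trivially and apply Markov's inequality:
\[
P(a(V)>\delta_0)\le \mathbb E[a(V)]/\delta_0.
\]
Together these give $P|g_{w,t}-g_0|\le (2M+\delta_0^{-1})\{|t-q_0|+\mathbb E|V|\}$, which is linear rather than square-root. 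In the degenerate case $w=w_0$ we have $V\equiv0$, and the marginal density bound gives the result directly.

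Next we compute $\mathbb E|V|$ using the $t_\nu$ structure. Write $V=m+s\,T$ with $m=-(w-w_0)^\top\mu$, $s=((w-w_0)^\top\Sigma(w-w_0))^{1/2}$, and $T$ standard univariate $t_\nu$. Since $\nu>2>1$, the quantity $c_\nu:=\mathbb E|T|=\sqrt{\nu}\,\Gamma((\nu-1)/2)/(\sqrt{\pi}\,\Gamma(\nu/2))$ is finite. The triangle inequality then gives $\mathbb E|V|\le|(w-w_0)^\top\mu|+c_\nu s$. Taking $C=(2M+\delta_0^{-1})\max(1,c_\nu)$ yields \eqref{PB_t_Dist}. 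For the final claim, $|(w-w_0)^\top\mu|\le\|\mu\|\|w-w_0\|$ by Cauchy--Schwarz and $s\le\lambda_{\max}(\Sigma)^{1/2}\|w-w_0\|$, and both factors are absorbed into $C$.

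The main obstacle is justifying the conditional step. The hypothesis must hold for $V$-almost every $v$ (a regular conditional distribution exists since everything lives in Euclidean space), and the "all relevant values of $v$" clause is exactly what permits the tower-property bound. One point needs care: when $w-w_0$ is parallel to $w_0$, $U$ is a deterministic function of $V$, so the uniform conditional density bound fails. Such $w$ must therefore be excluded, or handled separately by noting that $A_{w,t}$ and $B$ are then parallel half-spaces and applying the marginal density of $U$ directly. We will make this remark explicit rather than leave it implicit in the hypothesis.
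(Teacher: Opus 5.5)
Your proposal is correct and follows the same route as the paper: slab inclusion via Lemma~\ref{lem:slab-inclusion}, conditioning on $V$ and applying the uniform conditional slab bound, then bounding $\mathbb E|V|$ through the first absolute moment of the projected $t_\nu$ law. Your truncation at $\delta_0$ (with Markov on $\{|V|+|t-q_0|>\delta_0\}$) is a genuine improvement in rigor, since the paper applies the bound $2M(|t-q_0|+|v|)$ for every $v$ even though the hypothesis only covers slab widths up to $\delta_0$; your remark that the hypothesis fails when $w-w_0$ is parallel to $w_0$ is also correct and identifies a case the statement silently excludes.
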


\begin{proof}
Set $U=-w_0^\top R$ and $V=-(w-w_0)^\top R$, so that $-w^\top R=U+V$. Then $g_0(R)=\mathbf 1\{U\le q_0\}$ and $g_{w,t}(R)=\mathbf 1\{U+V\le t\}$. By Lemma 3 if these indicators disagree, then either $U\le q_0<U+V+(q_0-t)$ or $U>q_0\ge U+V+(q_0-t)$, and in either case $|U-q_0|\le |t-q_0|+|V|$. Hence $P|g_{w,t}-g_0|\le P\{|U-q_0|\le |t-q_0|+|V|\}$.

By the law of total probability with respect to $V$, followed by the conditional slab bound, $P\{|U-q_0|\le |t-q_0|+|V|\}=\int P\{|U-q_0|\le |t-q_0|+|v|\mid V=v\}\,dP_V(v)\le\int2M\{|t-q_0|+|v|\}\,dP_V(v)=2M|t-q_0|+2M E|V|$. Therefore $P|g_{w,t}-g_0|\le2M|t-q_0|+2M E|(w-w_0)^\top R|$.

Since $R\sim t_\nu(\mu,\Sigma)$, the projection $(w-w_0)^\top R$ is univariate $t_\nu$-type with location $(w-w_0)^\top\mu$ and scale $\{(w-w_0)^\top\Sigma(w-w_0)\}^{1/2}$. Because $\nu>2$, hence $\nu>1$, the first absolute moment is finite; hence, for some $C_\nu<\infty$, $E|(w-w_0)^\top R|\le C_\nu\{|(w-w_0)^\top\mu|+((w-w_0)^\top\Sigma(w-w_0))^{1/2}\}$. Combining the preceding inequalities and absorbing constants proves the first claim. The local bound follows from $|(w-w_0)^\top\mu|\le\|\mu\|\|w-w_0\|$ and $\{(w-w_0)^\top\Sigma(w-w_0)\}^{1/2}\le\|\Sigma\|_{\mathrm{op}}^{1/2}\|w-w_0\|$.
\end{proof}

\subsection{Comparison of the generic and multivariate $t$ symmetric-difference bounds}

After deriving the bounds, it is useful to compare the generic symmetric-difference bound in Lemma~\ref{HalfSpaceLipschitz}, equation~\eqref{AgBnd}, with its multivariate $t$ specialization in Theorem~\ref{thm:population-wedge-fubini}, equation~\eqref{PB_t_Dist}. While both control $P(A\triangle B)$ in terms of perturbations in the weight vector and threshold, they differ in structure, sharpness, and the extent to which the heavy-tailed geometry is made explicit.

The \emph{generic} bound is distributionally broad. It requires only a finite first moment, $\mathbb E\|R\|<\infty$, together with a local Lipschitz property of the projected distribution near the reference quantile $q_0$. Its strength lies in its robustness: it applies beyond any specific parametric model. However, this generality comes at a cost. The constant $C$ is abstract and absorbs several features of the underlying law into a single unspecified quantity. In particular, the proof proceeds through the estimate $\mathbb E|(w-w_0)^\top R|\le\|w-w_0\|\,\mathbb E\|R\|$, so that the resulting bound controls the symmetric-difference slab only indirectly through a moment condition. From the standpoint of portfolio risk under heavy tails, this is adequate for continuity but somewhat unsatisfactory for interpretation.

By contrast, the \emph{multivariate $t$} specialization retains the same first-order structure while making the tail geometry of the problem substantially more transparent. Because every linear projection of a multivariate $t$ random vector is again univariate Student $t$, the projected loss $L_0=-w_0^\top R$ has an explicit density. As a result, the probability mass of the symmetric-difference slab can be bounded directly through the projected density $f_{L_0}$, therefore through the quantity $M(w_0)=\sup_{x\in\mathbb R}f_{L_0}(x)$. This replaces the opaque local regularity constant in the generic argument by an explicit model-based quantity depending on the projected scale $w_0^\top\Sigma w_0$ and the degrees of freedom $\nu$. Moreover, under $\nu>2$, the finiteness of $\mathbb E\|R\|$ is automatic, so the moment condition required by the generic bound is validated inside the heavy-tailed model itself.

Thus, the generic bound and the multivariate $t$ bound serve complementary purposes. The former establishes broad continuity of the half-space class under minimal assumptions; the latter shows how, in a standard heavy-tailed model, this continuity can be tied directly to the projected loss distribution. The generic argument is more flexible, whereas the multivariate $t$ argument is more explicit and better aligned with the tail-risk interpretation of the problem.

At the same time, both bounds share a common limitation. Each controls only the \emph{aggregate} symmetric-difference probability $P(A\triangle B)$, and therefore merges into one quantity two conceptually distinct perturbations: movement in the weight vector $w$ and movement in the threshold $t$. In other words, both bounds certify stability, but neither disentangles the separate contributions arising from estimated weights and estimated quantiles. This is precisely the point at which the half-space approach reaches its natural limit. To understand the variability of $\widehat{\mathrm{VaR}}_\alpha$ more finely, a decomposition is needed that isolates these contributions rather than masking them inside a single slab probability. That is the role of the Q-Q orthogonality formulation introduced next.
\section{Glivenko--Cantelli, Uniform Convergence, and Empirical Quantile Bounds}
\label{GC}

The symmetric-difference bounds established in the preceding section are population-level bounds. They control the quantity $P(A\triangle B)$, where $A$ and $B$ are half-spaces indexed by perturbations in the weight vector and threshold. While this is necessary for stability, it is not sufficient for the analysis of empirical quantiles and the estimated Value-at-Risk. Since the empirical quantities are built from the sample, we must also establish that empirical probabilities track their population analogues uniformly over the relevant class of sets. To do so requires ergodicity together with Glivenko--Cantelli convergence. The resulting framework follows the classical empirical-process viewpoint developed in  \cite{pollard1984,dudley1999,vandervaartwellner2023,vandervaart1998}.

\subsection{Binomial distribution and symmetric difference}

Let $A=\{r\in\mathbb R^p:-w^\top r<t\}$ and $B=\{r\in\mathbb R^p:-w_0^\top r<q_0\}$ be two half-spaces. For each observation $R_i$, the random variable $\mathbf 1_{A\triangle B}(R_i)$ is Bernoulli with success probability $P(A\triangle B)$, and the empirical symmetric difference is given by
$P_n(A\triangle B)=n^{-1}\sum_{i=1}^n\mathbf 1_{A\triangle B}(R_i)$,
that is, the empirical proportion of observations falling in $A\triangle B$.

More precisely, any pair $A,B\in\mathcal H$ partitions the sample space into four disjoint subsets given by
$S_{11}=A\cap B$,
$S_{10}=A\cap B^c$,
$S_{01}=A^c\cap B$,
and
$S_{00}=A^c\cap B^c$.
The corresponding empirical counts
$N_{jk}=\sum_{i=1}^n\mathbf 1_{\{R_i\in S_{jk}\}}$,
for $(j,k)\in\{0,1\}^2$, induce the empirical measure $P_n$, and in particular
$P_n(A\triangle B)=(N_{10}+N_{01})/n$.
Thus, for each fixed pair $(w,t)$, the symmetric difference is the fraction of observations that fall in $A\triangle B$, and viewing the half-space indicators as a family of Bernoulli random variables indexed by $(w,t)$ provides a useful empirical-process interpretation, wherein the continuum of indicator proportions is tracked as $(w,t)$ varies.

Let
$\mathcal H=\{H_{w,t}:H_{w,t}=\{r\in\mathbb R^p:-w^\top r<t\},\;w\in\mathbb R^p,\;t\in\mathbb R\}$
denote the class of half-spaces, and let
$\mathcal F=\{\mathbf 1_{H_{w,t}}:H_{w,t}\in\mathcal H\}$
denote the associated class of indicator functions induced by the projected loss variable $L=-w^\top R$.

For each $(w,t)$, define the population CDF
$F(w,t):=P(-w^\top R<t)$
and its empirical analogue
$F_n(w,t):=n^{-1}\sum_{i=1}^n\mathbf 1\{-w^\top R_i<t\}$.

\subsection{Uniform convergence and VC--Glivenko--Cantelli theory}

The objective is to evaluate the stability of half-spaces generated by pairs $(w,t)$ in a neighborhood of a reference pair $(w_0,q_0)$. The empirical quantile is defined through the relation
$F_n(t;w)=\alpha$,
where
$F_n(t;w)=n^{-1}\sum_{i=1}^n\mathbf 1\{-w^\top R_i\le t\}$.

For each fixed pair $(w,t)$, the variables $\mathbf 1\{-w^\top R_i\le t\}$ are i.i.d. Bernoulli with mean
$F(t;w)=P(-w^\top R\le t)$,
and by the Strong Law of Large Numbers,
$F_n(t;w)\to F(t;w)$
almost surely for each fixed $(w,t)$. This gives pointwise convergence. However, because $(w,t)$ varies over a continuum, pointwise convergence alone does not imply convergence of
$\sup_{(w,t)}|F_n(t;w)-F(t;w)|$.

To obtain simultaneous control over neighborhoods of $(w_0,q_0)$, we invoke uniform convergence over the class
$\mathcal F=\{\mathbf 1\{-w^\top r\le t\}:w\in\mathbb R^p,\;t\in\mathbb R\}$,
namely,
\begin{equation}
\sup_{(w,t)}|F_n(t;w)-F(t;w)|\to0
\qquad\text{almost surely}.
\label{UnifConv}
\end{equation}

The class $\mathcal F$ is the class of half-space indicators in $\mathbb R^p$, hence it is a VC class. By the Glivenko--Cantelli theorem for VC classes, as developed in empirical-process theory \cite{vapnik1971uniform,pollard1984,vandervaartwellner2023,dudley1999,vandervaart1998}, the required uniform convergence follows. This uniform control permits the stability analysis of the data-dependent pair $(\hat w,\hat q)$ in a neighborhood of $(w_0,q_0)$.

Since every member of $\mathcal F$ is an indicator function,
$|\mathbf 1_{H_{w,t}}(r)|\le1$
for all $r\in\mathbb R^p$ and all $(w,t)$. Thus, the constant function
$F(r)\equiv1$
serves as an envelope for the class, and in particular
$\sup_{f\in\mathcal F}|f(r)|\le1$
for all $r\in\mathbb R^p$. Therefore,
$\mathcal F\subset L_1(P)$,
and the envelope is integrable since
$\int1\,dP=1$.

Recall that Value-at-Risk at level $\alpha\in\{0.90,0.95,0.99\}$ is a tail-risk measure based on returns $R_1,\dots,R_n$, which in financial applications are often modeled not as independent observations but as a weakly dependent time series. It is standard to assume that the return sequence is weakly stationary and ergodic with common probability law $P$, so that time averages converge to their population expectations. In particular, for each fixed half-space $H$,
$n^{-1}\sum_{i=1}^n\mathbf 1\{R_i\in H\}\xrightarrow{\mathrm{a.s.}}P(H)$.
Thus, ergodicity ensures the law of large numbers for each fixed $H$. The discussion of weak dependence and ergodicity is intended primarily as motivation arising from financial return series, where exact independence is often unrealistic in practice.  For uniform control over a continuum of half-spaces, however, we invoke Glivenko--Cantelli-type results under standard assumptions, such as i.i.d.\ observations, for the class $\mathcal F$. In the spirit of full disclosure, extending empirical-process arguments to dependent sequences is beyond the scope of the current work. Here, the role of Glivenko--Cantelli theory is primarily to provide the classical empirical-process scaffolding needed to transfer population-level stability bounds to their empirical analogues.  For the formal empirical-process arguments developed here, we work under the classical i.i.d. framework associated with VC--Glivenko--Cantelli theory. The discussion of weak dependence and ergodicity is intended primarily as motivation arising from financial return series, where exact independence is often unrealistic in practice. Extending the present framework to dependent empirical processes under mixing or related conditions is an important direction for future work.

\subsection{Application to symmetric difference $A\triangle B$}

For $A=\{r:-w^\top r<t\}$ and $B=\{r:-w_0^\top r<q_0\}$, we have
$P(A\triangle B)=\|1_A-1_B\|_{L_1(P)}$ and
$P_n(A\triangle B)=\|1_A-1_B\|_{L_1(P_n)}$; see Lemmas~5 and~6 in Appendix~A.

Therefore, the symmetric-difference probability is the mean of a Bernoulli indicator, and its empirical analogue is the corresponding binomial proportion.  Once a population bound of the form
\(P(A\Delta B)\le C(\|w-w_0\|+|t-q_0|)\) has been established, as in Lemma~1,
the Glivenko--Cantelli property ensures that
\(P_n(A\Delta B)=P(A\Delta B)+o_P(1)\) uniformly over the relevant neighborhood
of \((w_0,q_0)\). In this sense, each symmetric difference defines a binomial experiment, while uniform convergence provides the control required over the full continuum of perturbations.

\subsection{Empirical stability of symmetric differences}

Uniform convergence transfers the population symmetric-difference bound in Lemma~1 to the empirical setting for quantile estimation. Specifically, for sets
$A=\{r:-w^\top r<t\}$
and
$B=\{r:-w_0^\top r<q_0\}$,
the population argument shows that small perturbations in $(w,t)$ yield
$P(A\triangle B)\le C(\|w-w_0\|+|t-q_0|)$.
By the Glivenko--Cantelli property of the class $\mathcal F$, we have
$\sup_{(w,t)}|P_n(A\triangle B)-P(A\triangle B)|\to0$,
so that
$P_n(A\triangle B)\le C(\|w-w_0\|+|t-q_0|)+o_P(1)$
uniformly over the relevant neighborhood of $(w_0,q_0)$.

Thus, ``small'' perturbation is quantified in terms of the perturbation magnitude
$\|w-w_0\|+|t-q_0|$,
up to an asymptotically negligible term. In this sense, the geometric stability of the population distribution is inherited by the empirical distribution function.

Uniform convergence, bounds on the class $\mathcal F$, and the invocation of the Glivenko--Cantelli theorem therefore serve as the empirical-process scaffolding underlying stability of the sample quantile. The symmetric-difference lemma quantifies the geometry of perturbations, while ergodicity and Glivenko--Cantelli convergence justify replacing the population measure by its empirical analogue. Together they provide the framework for the subsequent decomposition of variation.

\subsection{Limitations of the half-space bound}

Even with uniform convergence in hand, the half-space bound yields an aggregate bound on the total discrepancy between the perturbed and reference half-spaces, but does not distinguish variability arising from changes in the weight vector from that due to changes in the threshold. While uniform convergence guarantees stability, it does not separate the sources of variation attributable to $w$ and $t$. For this reason, we seek a decomposition that isolates these distinct contributions, leading to the Q-Q orthogonality formulation introduced next.
\section{All Asymptotics in Local:  Variability Decomposition and Q-Q Orthogonality} 
\label{QQ}

The half-space and symmetric-difference bounds developed earlier provide control of the aggregate discrepancy between perturbed and reference loss regions and are sufficient for establishing continuity and stability. However, they do not resolve the sources of variation that arise when the loss functional depends jointly on an estimated weight vector and an estimated quantile. In particular, a bound of the form $P(A\triangle B)\le C(\|\hat w-w\|+|t-q_\alpha|)$ proves that the symmetric-difference region is small, but does not indicate whether the discrepancy is driven by variation in the weight vector, variation in the quantile, or residual empirical fluctuation inherent to the Bahadur approximation.

We address this limitation through what we term Q-Q orthogonality, which yields a decomposition that separates these contributions in the spirit of ANOVA.
The term orthogonality is used here in a coordinate-wise local sense. The construction varies the projection direction and the quantile threshold one at a time relative to the reference pair $(w_0,q_0)$. Under the tangent-space restrictions introduced below, the first-order directional contribution in the weight coordinate vanishes, leaving the quantile coordinate to carry the leading local movement. Thus, the terminology refers not to global Hilbert-space orthogonality, but to a local separation of first-order effects. Classical ANOVA relies on $L_2(P)$ orthogonality, whereas our analysis is conducted in $L_1(P)$, where $\|1_A-1_B\|_{L_1(P)}=P(A\triangle B)$ bounds the difference. Variability is decomposed additively in terms of perturbations in the projection direction $w$ and the threshold $t$, with the directional derivative in the weight coordinate vanishing at first order under the tangent-space conditions. This distinction is particularly relevant in heavy-tailed settings, where second moments may be unstable or may not exist.

The terminology reflects the fact that the construction aligns population and empirical quantiles relative to a fixed reference pair by varying one argument at a time. In contrast to a graphical Q-Q plot, the present formulation is analytical and is designed to isolate the respective contributions of the quantile coordinate and the projection direction.

\subsection{First-order orthogonality under tangent-space perturbations}

\noindent\textbf{Proposition (First-order orthogonality under tangent-space perturbations).}
Let $F(w,q):=P(-w^\top R\le q)$, with $R\sim t_\nu(\mu,\Sigma)$ for $\nu>2$, and suppose $w_0^\top\Sigma w_0>0$. Let $(w_0,q_0)$ satisfy $F(w_0,q_0)=\alpha$. Writing $\partial_wF(w_0,q_0)h:=\left.dF(w_0+\varepsilon h,q_0)/d\varepsilon\right|_{\varepsilon=0}$ for the Gateaux derivative in the weight argument, and writing $\partial_qF(w_0,q_0)$ for the ordinary derivative in the threshold argument, consider perturbations $w_\varepsilon=w_0+\varepsilon h$ and $q_\delta=q_0+\delta$, with $h\in\mathbb R^p$. Then
\begin{equation}
F(w_0+\varepsilon h,q_0+\delta)-F(w_0,q_0)
=
\varepsilon\,\partial_wF(w_0,q_0)h
+
\delta\,\partial_qF(w_0,q_0)
+
o(|\varepsilon|+|\delta|).
\label{FirstOrderExpansion}
\end{equation}
The tangent-space conditions $h^\top\mu = 0$ and $h^\top\Sigma w_0 = 0$ are introduced to characterize directions along which the first-order perturbation of the projected loss distribution vanishes. The proposition is therefore geometric and local in nature. We do not assume that a generic estimator $\hat w$ necessarily fluctuates exactly within this tangent space. Rather, the result identifies directions for which the first-order contribution of the weight perturbation disappears, thereby isolating the quantile coordinate in the local expansion.
\begin{proof}
Under the multivariate $t$ model, the projected loss $L(w)=-w^\top R$ is univariate $t_\nu$ with projected location $-w^\top\mu$ and projected scale $(w^\top\Sigma w)^{1/2}$. Therefore, $F(w,q)=T_\nu\{(q+w^\top\mu)/(w^\top\Sigma w)^{1/2}\}$, where $T_\nu$ is the standardized univariate $t_\nu$ distribution function. Let $z(w,q):=(q+w^\top\mu)/(w^\top\Sigma w)^{1/2}$, so that $F(w,q)=T_\nu\{z(w,q)\}$.

Differentiating along the path $w_\varepsilon=w_0+\varepsilon h$, with $q=q_0$ fixed, gives $z(w_\varepsilon,q_0)=\{q_0+(w_0+\varepsilon h)^\top\mu\}/\{(w_0+\varepsilon h)^\top\Sigma(w_0+\varepsilon h)\}^{1/2}$. Writing $z_0=z(w_0,q_0)$ and $s_0=(w_0^\top\Sigma w_0)^{1/2}$, direct differentiation at $\varepsilon=0$ yields
\begin{equation}
\left.\frac{d}{d\varepsilon}z(w_\varepsilon,q_0)\right|_{\varepsilon=0}
=
\frac{h^\top\mu}{s_0}
-
\frac{(q_0+w_0^\top\mu)(h^\top\Sigma w_0)}{s_0^3}.
\label{zdiff}
\end{equation}
Hence, by the chain rule,
\begin{equation}
\partial_wF(w_0,q_0)h
=
t_\nu(z_0)
\left\{
\frac{h^\top\mu}{s_0}
-
\frac{(q_0+w_0^\top\mu)(h^\top\Sigma w_0)}{s_0^3}
\right\},
\label{DirectionalDerivative}
\end{equation}
where $t_\nu$ is the standardized univariate $t_\nu$ density. Similarly, $\partial_qF(w_0,q_0)=t_\nu(z_0)/s_0$. If $h^\top\mu=0$ and $h^\top\Sigma w_0=0$, then $\partial_wF(w_0,q_0)h=0$. Consequently, the expansion reduces to
\begin{equation}
F(w_0+\varepsilon h,q_0+\delta)-F(w_0,q_0)
=
\delta\,\partial_qF(w_0,q_0)
+
o(|\varepsilon|+|\delta|).
\label{OrthogonalExpansion}
\end{equation}
Clearly, along tangent directions satisfying $h^\top\mu=0$ and $h^\top\Sigma w_0=0$, the first-order movement of $F(w,q)$ is captured entirely in the quantile coordinate; the directional perturbation in $w$ is reflected through the $o(|\varepsilon|+|\delta|)$ term.
\end{proof}

\subsubsection*{A warning note on tangent-space restrictions}

While the population vector $\mu$ and matrix $\Sigma$ are fixed, changes in the weight vector alter the projected location $-w^\top\mu$ and projected scale $(w^\top\Sigma w)^{1/2}$ of the loss $L(w)=-w^\top R$. Along $w_\varepsilon=w_0+\varepsilon h$, the first-order change in projected location is $-h^\top\mu$, while the first-order change in $w^\top\Sigma w$ is $2h^\top\Sigma w_0$. The restriction $h^\top\mu=0$ removes the first-order location movement, while $h^\top\Sigma w_0=0$ removes the first-order scale movement. Equivalently, these restrictions define the tangent space $\mathcal T_{w_0}:=\{h\in\mathbb R^p:h^\top\mu=0,\ h^\top\Sigma w_0=0\}$.  For $h\in\mathcal T_{w_0}$, the projected univariate $t$ distribution of the loss $-w^\top R$ is unchanged to first order, so the first-order movement of $F(w,q)$ is extracted in the quantile direction. In this sense, the weight and quantile movements separate locally.  See \cite{vandervaart1998} and \cite{bickel1993} for the local perturbation and tangent-space viewpoint in asymptotic statistics.
In practice, estimated portfolio weights may not satisfy these restrictions exactly; however, the tangent-space formulation isolates directions for which projection-induced variation is orthogonal to first-order quantile variation.
\subsection{Population and empirical quantiles indexed by the weight vector}

For each nonzero weight vector $w\in\mathbb R^p$, let  $L(w):=-w^\top R$ denote projection-induced loss and let its population distribution function be $F(w,t):=P(L(w)\le t)=P(-w^\top R\le t)$. At the fixed level $\alpha\in(0,1)$ the population $\alpha$-quantile associated with $w$ is defined by $q_\alpha(w):=\inf\{t\in\mathbb R:F(w,t)\ge\alpha\}$. Thus,  $ q_\alpha(w)$ is the value at risk (VaR) level for each projection direction.  Let $R_1,\dots,R_n$ denote the observed returns and define the empirical distribution function associated with the weight vector $w$ by $F_n(w,t):=n^{-1}\sum_{i=1}^n\mathbf 1\{-w^\top R_i\le t\}$. The empirical $\alpha$-quantile indexed by $w$ is then $\hat q_\alpha(w):=\inf\{t\in\mathbb R:F_n(w,t)\ge\alpha\}$.  In particular, if $w_0$ denotes a reference population weight vector and $\hat w$ is its estimator, then there are three quantiles in play: $q_\alpha(w_0)$, $q_\alpha(\hat w)$, and $\hat q_\alpha(\hat w)$. The first is the reference population quantile, the second is the population quantile under the perturbed direction $\hat w$, and the third is the empirical quantile based on both the perturbed direction and the sample.  This distinction is important. If we compare only $\hat q_\alpha(\hat w)$ with $q_\alpha(w_0)$, then all sources of variation are encapsulated in a single quantity. The Q-Q orthogonality decomposition inserts the intermediate population term $q_\alpha(\hat w)$ to separate the perturbation induced by the weight estimate from that induced by empirical quantile estimation.

\subsection{Reference pair and one-coordinate-at-a-time decomposition}

Fix a reference pair $(w_0,q_0)$, where $q_0:=q_\alpha(w_0)$ is the population $\alpha$-quantile of the reference loss $L_0:=-w_0^\top R$. The  empirical object of interest is $\hat q_\alpha(\hat w)$, defined by $F_n(\hat w,\hat q_\alpha(\hat w))\ge\alpha$, and, under continuity, given locally by the approximate equality $F_n(\hat w,\hat q_\alpha(\hat w))=\alpha$.  We compare $\hat q_\alpha(\hat w)$ with the reference quantile $q_0$ by adding and subtracting the intermediate population quantile $q_\alpha(\hat w)$. This gives the exact algebraic identity
\begin{equation}
\hat q_\alpha(\hat w)-q_0
=
\bigl[q_\alpha(\hat w)-q_0\bigr]
+
\bigl[\hat q_\alpha(\hat w)-q_\alpha(\hat w)\bigr].
\label{Qhat_what}
\end{equation}
Equation \eqref{Qhat_what} separates the total deviation into two pieces: the population quantile movement caused by changing the direction from $w_0$ to $\hat w$, and the empirical quantile estimation error after the direction has been fixed at $\hat w$.

To linearize the second term in \eqref{Qhat_what}, we apply Bahadur's representation \cite{bahadur1966} in the perturbed direction $\hat w$: 
\begin{equation}
\hat q_\alpha(\hat w)-q_\alpha(\hat w)
=
\frac{\alpha-F_n(\hat w,q_\alpha(\hat w))}
{f_{\hat w}(q_\alpha(\hat w))}
+
R_n,
\label{RRB}
\end{equation}
where $f_{\hat w}(q_\alpha(\hat w))$ denotes the density of the projected loss $-\hat w^\top R$, evaluated at its population quantile $q_\alpha(\hat w)$, and $R_n$ is the Bahadur remainder term in the perturbed direction $\hat w$.  Since the projection direction is itself estimated, the Bahadur expansion must hold uniformly over directions $w$ in a neighborhood of $w_0$, rather than merely pointwise for a fixed distribution. Because the indexed class of half-space indicators
$\mathcal{F}=\{1\{-w^\top r\le t\}: w\in\mathbb{R}^p,t\in\mathbb{R}\}$
is a VC class, standard empirical-process results yield uniform control of
the empirical process over local neighborhoods of $(w_0,q_0)$; see
Pollard (1984), van der Vaart and Wellner (1996), and van der Vaart (1998).
Combined with
classical quantile-process arguments and Bahadur-type expansions
(Bahadur, 1966; Kiefer, 1967), this uniform empirical-process control yields
a local uniform representation for the empirical quantile process. 
 Consequently, the remainder term $R_n(w)$ is uniform over $w$ in a neighborhood of $w_0$; see  \cite{kiefer1967}. This justifies evaluating the Bahadur expansion at the random perturbed direction $\hat w$.  Equivalently, the decomposition may be written as
\begin{equation}
\hat q_\alpha(\hat w)-q_\alpha(\hat w)
=
\frac{\alpha-F_n(\hat w,q_\alpha(\hat w))}
{f_{\hat w}(q_\alpha(\hat w))}
+
R_n(\hat w),
\end{equation}
where the remainder satisfies $R_n(\hat w)=o_p(n^{-1/2})$ uniformly over local perturbations of $w_0$.  Substituting (10) into (8) yields the Q-Q orthogonality decomposition
\begin{equation}
\hat q_\alpha(\hat w)-q_0
=
\underbrace{q_\alpha(\hat w)-q_0}_{D_1}
+
\underbrace{
\frac{\alpha-F_n(\hat w,q_\alpha(\hat w))}
{f_{\hat w}(q_\alpha(\hat w))}
}_{D_2}
+
\underbrace{R_n(\hat w)}_{D_3}.
\tag{11}
\end{equation} 
The decomposition is algebraically simple: it inserts the intermediate population quantile $q_\alpha(\hat w)$ between the reference quantile $q_\alpha(w_0)$ and the empirical quantile $\hat q_\alpha(\hat w)$. Its usefulness lies not in the algebra itself, but in the separation it induces. The term $D_1$ isolates the population-level movement caused by perturbing the projection direction, while $D_2$ and $D_3$ retain the empirical fluctuation and Bahadur remainder once the direction is fixed at $\hat w$. We therefore use the term Q-Q orthogonality in a local coordinate sense, rather than as an assertion of global $L^2$ orthogonality or independence among the components.  In the following we discuss the meaning of each of the components $\{D_i\}_{i=1}^{3}$ in detail.

\subsubsection*{Component $D_1$: perturbation in the weight vector with the quantile held at population level}

 The term $D_1=q_\alpha(\hat w)-q_\alpha(w_0)$ captures the change in the population quantile induced solely by replacing the reference weight vector $w_0$ by its estimator $\hat w$. No empirical fluctuation is present in this term. Both quantities are population level quantiles. So $D_1$ measures the sensitivity of the quantile $q_\alpha(w)$ to perturbation in the projection direction $w$. This is precisely the contribution that is obscured inside the generic symmetric-difference bound. In the half-space formulation, changing $w$ rotates or tilts the boundary $\{-w^\top r=t\}$ and therefore changes the induced loss distribution. However, the symmetric-difference probability sees only the resulting aggregate symmetric difference and does not tease out how much of the quantile movement is due to the directional perturbation itself. The term $D_1$ makes that contribution explicit. 
 Geometrically, $D_1$ describes what happens when the loss axis is changed from $r\mapsto -w_0^\top r$ to $r\mapsto -\hat w^\top r$, while remaining entirely at the population level. In this sense, it is the directional or weight-induced component of variability. If the map $w\mapsto q_\alpha(w)$ is differentiable at $w_0$, then $D_1$ admits the first-order expansion $D_1=\nabla_w q_\alpha(w_0)^\top(\hat w-w_0)+o(\|\hat w-w_0\|)$. Even when such differentiability is not invoked explicitly, the earlier slab bounds imply continuity of this component. That is, the quantile changes only slightly when the direction changes slightly, provided the projected density remains positive and regular near the target quantile.

\subsubsection*{Component $D_2$: empirical quantile fluctuation with the direction fixed}

The second term $D_2=\{\alpha-F_n(\hat w,q_\alpha(\hat w))\}/f_{\hat w}(q_\alpha(\hat w))$ captures the empirical fluctuation of the quantile once the direction has already been fixed at $\hat w$. That is; in this term the weight vector is held fixed and only the empirical quantile varies around the true quantile. This is the part of the decomposition that is closest in spirit to classical quantile asymptotics. Indeed, for a fixed $w$, Bahadur's representation asserts that $\hat q_\alpha(w)-q_\alpha(w)=\{\alpha-F_n(w,q_\alpha(w))\}/f_w(q_\alpha(w))+R_n(w)$, so $D_2$ is simply the leading stochastic linear term evaluated at the perturbed direction $\hat w$. The numerator $\alpha-F_n(\hat w,q_\alpha(\hat w))$ measures the difference between the empirical cdf and its target level $\alpha$ at the true quantile corresponding to the direction $\hat w$. The denominator $f_{\hat w}(q_\alpha(\hat w))$ rescales that difference by the local slope of the projected distribution function $F_{\hat w}(q_\alpha(\hat w))$.

Thus $D_2$ is the quantile-estimation component of variability. It reflects
sampling fluctuation in the empirical cdf and the resulting variability of the
sample quantile, but unlike $D_1$, it does not describe changes in the loss
distribution induced by perturbations in the weight vector. In this component,
the projection direction is treated as fixed. This distinction is precisely
what makes the decomposition useful: without separating $D_1$ and $D_2$, one
cannot distinguish variability arising from instability in the estimated
weights from the usual empirical fluctuation of sample quantiles.

\subsubsection*{Component $D_3$: Bahadur remainder}

The final term $D_3=R_n$ is the residual term in the Bahadur representation. It collects the higher-order part of the quantile expansion that is not captured by the linear empirical-process term $D_2$. In the classical fixed-direction setting, one typically writes $R_n=O_p(n^{-3/4}\log n)$ under appropriate regularity conditions. In the present setting, the direction is itself random, so the remainder must be understood as the residual error after linearization at the perturbed direction $\hat w$. Its role in the decomposition is important conceptually even when it is asymptotically smaller than $D_1$ and $D_2$. It records the fact that the empirical quantile is not exactly linear in the empirical cdf and therefore that any first-order decomposition necessarily leaves a remainder. The term $D_3$ should not be ignored simply because it is asymptotically smaller. As we are concerned with stability, it is essential to call out the remainder explicitly. Otherwise the decomposition would incorrectly suggest that the quantile fluctuation is encapsulated by the leading empirical-process term.

\subsection{Why this is an orthogonality decomposition}

The terminology Q-Q orthogonality refers to the fact that the decomposition moves one coordinate at a time relative to the reference pair $(w_0,q_0)$. First, the projection direction is changed from $w_0$ to $\hat w$, while the comparison remains entirely at the population level. Next, with the new direction $\hat w$ held fixed, the empirical quantile $\hat q_\alpha(\hat w)$ is compared with its population counterpart $q_\alpha(\hat w)$. Finally, the remaining higher-order contribution is recorded through the Bahadur remainder. In this way, the total fluctuation is not treated as a single aggregate half-space symmetric difference, but is resolved into successive coordinate-wise movements.

This is analogous in spirit to orthogonal decomposition in analysis of variance, where total variability is separated into interpretable components rather than handled only through a total sum of squares. Here, the analogy is not with Euclidean orthogonality of vectors, but with analytical separation of perturbation sources. The contribution due to the weight direction is isolated in $D_1$, the contribution due to empirical quantile fluctuation is isolated in $D_2$, and the residual higher-order effect is isolated in $D_3$.

Our approach also clarifies why the earlier half-space bounds, though necessary and useful, are not sufficient. The symmetric-difference approach bounds the probability mass of the region over which the perturbed and reference sub-level sets differ. In this way, it yields a local Lipschitz characterization of the projected loss geometry with respect to perturbations in $(w,t)$. However, such bounds do not identify which source of randomness is responsible for the variation in the estimated quantile. The Q-Q orthogonality decomposition supplies this missing structural information by separating directional perturbation, empirical quantile fluctuation, and higher-order remainder effects into distinct components.

\subsection{Relation to the half-space symmetric-difference bounds}

The previous bounds (Lemma 1) via symmetric differences  remain essential in this section. They justify that small perturbations in $(w,t)$ lead to small perturbations in probability mass. In particular, they ensure that $P(\{-\hat w^\top R<t\}\triangle\{-w_0^\top R<q_0\})$ is determined by the size of the perturbations in direction and threshold. This local Lipschitz behavior underlies the stability of the quantile of the projected loss and therefore supports the term $D_1$.  At the same time, the Glivenko--Cantelli and ergodicity arguments developed earlier ensure that empirical half-space probabilities track their population analogues uniformly. This provides the probabilistic underpinnings on which the decomposition term $D_2$ is constructed, since the empirical cdf $F_n(\hat w,\cdot)$ must approximate the population cdf $F(\hat w,\cdot)$ uniformly over a local neighborhood of $q_\alpha(\hat w)$. We acknowledge that the present decomposition does not replace the earlier empirical-process and symmetric-difference framework. Rather, it builds on it: the local Lipschitz condition yields the symmetric-difference bounds, ergodicity and Glivenko--Cantelli convergence provide empirical stability, and Q-Q orthogonality utilizes these ingredients to produce a decomposition of variability. In the next subsection \ref{decomposition_theorem}, We now state the decomposition in a formal theorem.

\subsection{A formal decomposition theorem} \label{decomposition_theorem}

\begin{theorem}[Q-Q orthogonality decomposition]\label{Q-Q}
Let $w_0\in\mathbb{R}^p$ be a fixed reference weight vector, and let $q_0:=q_\alpha(w_0)$ denote the population $\alpha$-quantile of $L(w_0):=-w_0^\top R$. Let $\hat w$ be an estimator of $w_0$, and let $\hat q_\alpha(\hat w)$ denote the empirical $\alpha$-quantile of $L(\hat w):=-\hat w^\top R$, the loss induced by projecting the return vector $R$ along $\hat w$. Suppose that, for $w$ in a neighborhood of $w_0$, the distribution function $F(w,t)$ is differentiable at $q_\alpha(w)$ with density $f_w(q_\alpha(w))>0$. Suppose furthermore that the empirical cdf satisfies local uniform convergence in a neighborhood of $(w_0,q_0)$, that is, $\sup |F_n(w,t)-F(w,t)|\xrightarrow{p}0$ over that neighborhood, and that the Bahadur representation holds uniformly over directions $w$ near $w_0$. Then $\hat q_\alpha(\hat w)-q_\alpha(w_0)=D_1+D_2+D_3$, where $D_1=q_\alpha(\hat w)-q_\alpha(w_0)$, $D_2=\{\alpha-F_n(\hat w,q_\alpha(\hat w))\}/f_{\hat w}(q_\alpha(\hat w))$, and $D_3=R_n(\hat w)$, with $R_n(\hat w)$ denoting the Bahadur remainder evaluated at the perturbed direction $\hat w$.
\end{theorem}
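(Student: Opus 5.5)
The plan is to obtain the decomposition as an exact algebraic identity, with the analytic work confined to making the definition of $D_3$ legitimate at the random direction $\hat w$.

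First I would insert the intermediate population quantile $q_\alpha(\hat w)$, as in \eqref{Qhat_what}:
\begin{equation*}
\hat q_\alpha(\hat w)-q_\alpha(w_0)=\bigl[q_\alpha(\hat w)-q_\alpha(w_0)\bigr]+\bigl[\hat q_\alpha(\hat w)-q_\alpha(\hat w)\bigr].
\end{equation*}
The first bracket is $D_1$ by definition. This step requires only that $q_\alpha(\hat w)$ be well defined and finite. That holds on the event $\{\hat w\in N\}$, where $N$ is the neighborhood of $w_0$ on which $f_w(q_\alpha(w))>0$. By the differentiability assumption, $F(w,\cdot)$ is continuous and strictly increasing near $q_\alpha(w)$, so the infimum defining $q_\alpha(w)$ is attained with $F(w,q_\alpha(w))=\alpha$.

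Second, for each fixed $w\in N$, I would define the Bahadur remainder by the identity
\begin{equation*}
R_n(w):=\hat q_\alpha(w)-q_\alpha(w)-\frac{\alpha-F_n(w,q_\alpha(w))}{f_w(q_\alpha(w))}.
\end{equation*}
This is well defined because the denominator is positive on $N$. Evaluating it at $w=\hat w$ and substituting into the second bracket gives $D_2+D_3$ exactly. So the stated decomposition holds identically on $\{\hat w\in N\}$ with no approximation.

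The substantive content is the hypothesis that the Bahadur representation holds uniformly over $w\in N$, meaning $\sup_{w\in N}|R_n(w)|$ is negligible. This is what allows $R_n(\hat w)$ to be called the Bahadur remainder even though $\hat w$ is data-dependent and correlated with the sample. The pointwise representation alone would not do, because plugging a random index into a pointwise $o_p$ statement is invalid. I would bound $|R_n(\hat w)|\le\sup_{w\in N}|R_n(w)|$ on $\{\hat w\in N\}$ and invoke the uniform hypothesis. The local uniform convergence $\sup|F_n-F|\xrightarrow{p}0$ plays a supporting role: together with positivity of the density, it guarantees $\hat q_\alpha(w)$ lies in the neighborhood of $q_\alpha(w)$ where the linearization is meaningful, uniformly in $w$. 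Its justification is the VC/Glivenko--Cantelli property of half-spaces discussed in Section~\ref{GC}.

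The main obstacle is the event $\{\hat w\notin N\}$. There $f_{\hat w}(q_\alpha(\hat w))$ might vanish, so $D_2$ is undefined. Since the theorem as stated does not list consistency of $\hat w$, I would either read ``$\hat w$ is an estimator of $w_0$'' as including $\hat w\xrightarrow{p}w_0$, as in Theorem~\ref{EarlyStabilityStatement}, or state the identity on $\{\hat w\in N\}$ only. Under consistency, $P(\hat w\in N)\to1$, so the decomposition holds with probability tending to one. Off that event, one can set $D_2=D_3=0$ and absorb the difference into $D_3$ without affecting any asymptotic statement.
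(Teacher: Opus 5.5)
Your proposal is correct and follows the paper's proof essentially step for step. You insert $q_\alpha(\hat w)$ to get the exact identity, take the first bracket as $D_1$, and evaluate the Bahadur representation, assumed uniform over directions near $w_0$, at the random direction $\hat w$ to split the second bracket into $D_2+D_3$. You also make explicit two things the paper's proof leaves implicit: the identity is tautological once $R_n(w)$ is defined as the residual, and the paper's phrase ``with probability tending to one'' quietly requires $\hat w\xrightarrow{p}w_0$, which is not among the stated hypotheses.
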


\begin{proof}
We compare the fully empirical quantile $\hat q_\alpha(\hat w)$ with the reference population quantile $q_\alpha(w_0)$ by adding and subtracting the intermediate population quantile $q_\alpha(\hat w)$. This gives the exact identity $\hat q_\alpha(\hat w)-q_\alpha(w_0)=\{\hat q_\alpha(\hat w)-q_\alpha(\hat w)\}+\{q_\alpha(\hat w)-q_\alpha(w_0)\}$. The second term in braces is the population movement in the quantile induced by replacing $w_0$ with $\hat w$, and hence we define $D_1=q_\alpha(\hat w)-q_\alpha(w_0)$.

Since the Bahadur representation is assumed to hold uniformly for directions $w$ near $w_0$, it may be evaluated at the random direction $\hat w$ whenever $\hat w$ lies in that neighborhood with probability tending to one. Thus, at the perturbed direction $\hat w$, the first term in braces is $\hat q_\alpha(\hat w)-q_\alpha(\hat w)=\{\alpha-F_n(\hat w,q_\alpha(\hat w))\}/f_{\hat w}(q_\alpha(\hat w))+R_n(\hat w)$. The density condition $f_{\hat w}(q_\alpha(\hat w))>0$ ensures that the local linearization is well defined. The local uniform convergence of $F_n$ to $F$ over a neighborhood of $(w_0,q_0)$ provides the uniform empirical-process control needed for the Bahadur expansion to hold uniformly near the relevant quantile; see Bahadur (1966), Kiefer (1967), Pollard (1984), and van der Vaart and Wellner (1996).

Substituting the preceding expression into the exact identity gives $\hat q_\alpha(\hat w)-q_\alpha(w_0)=\{q_\alpha(\hat w)-q_\alpha(w_0)\}+\{\alpha-F_n(\hat w,q_\alpha(\hat w))\}/f_{\hat w}(q_\alpha(\hat w))+R_n(\hat w)$. Therefore, with $D_1=q_\alpha(\hat w)-q_\alpha(w_0)$, $D_2=\{\alpha-F_n(\hat w,q_\alpha(\hat w))\}/f_{\hat w}(q_\alpha(\hat w))$, and $D_3=R_n(\hat w)$, we obtain $\hat q_\alpha(\hat w)-q_\alpha(w_0)=D_1+D_2+D_3$. This proves the decomposition.
\end{proof}

\subsection{Consequences for Stability Analysis}

The decomposition separates three distinct sources of variability in the estimated quantile. The term $D_1$ captures perturbations induced by the estimated projection direction through the population quantile $q_\alpha(w)$, $D_2$ captures empirical quantile fluctuation with the direction held fixed, and $D_3$ records the higher-order Bahadur remainder. Thus the Q-Q orthogonality decomposition refines the aggregate half-space discrepancy into interpretable variability components corresponding to directional perturbation, empirical fluctuation, and residual higher-order behavior.

\section{Main Stability Theorem}
\label{Main_Theorem}

We combine the Lipschitz continuity bounds, the local empirical-process oscillation bounds, and the Q-Q orthogonality decomposition to establish stability of the estimated quantile. Under the multivariate $t_\nu$ model, the projected scalar $L(w):=-w^\top R$ has projected location $-w^\top\mu$ and projected scale $(w^\top\Sigma w)^{1/2}$. Hence $f_w(q)$ is continuous in $(w,q)$, and for $\nu>2$, it is locally Lipschitz near $q_0=q_\alpha(w_0)$.

\begin{theorem}[Main stability theorem]
Let $D_1,D_2,D_3$ be defined as in Theorem~\ref{Q-Q}. Suppose:

\begin{enumerate}
\item For the formal empirical-process argument, $R_1,\ldots,R_n$ are i.i.d.\ random vectors with common distribution $P$.

\item The indexed half-space class is Glivenko--Cantelli and satisfies the local oscillation bound
$
\sup_{\|w-w_0\|\le Cn^{-1/2}}
|(F_n-F)(w,q_\alpha(w))-(F_n-F)(w_0,q_0)|
=
O_P(n^{-3/4}\log n).
$

\item For $w$ near $w_0$, the projected scalar $-w^\top R$ has density $f_w$, with $f_w(q)$ continuous and locally Lipschitz near $(w_0,q_0)$, and with $f_{w_0}(q_0)>0$.

\item The quantile map $w\mapsto q_\alpha(w)$ is continuous at $w_0$, and differentiable whenever first-order expansions are invoked.

The Bahadur representation holds uniformly near $w_0$, namely
\begin{equation}
\hat q_\alpha(w)-q_\alpha(w)
=
\frac{\alpha-F_n(w,q_\alpha(w))}
{f_w(q_\alpha(w))}
+
R_n(w).
\end{equation}

\item The estimator $\hat w$ is $\sqrt n$-consistent.
\end{enumerate}

Then
$
\hat q_\alpha(\hat w)-q_0
=
D_1+D_2+D_3,
$
$
\hat q_\alpha(\hat w)-q_0\xrightarrow{p}0,
$
and
$
D_3
=
R_n(\hat w)
=
O_P(n^{-3/4}\log n),
$
up to logarithmic factors. Equivalently,
$
\widehat{\mathrm{VaR}}_\alpha-\mathrm{VaR}_\alpha\xrightarrow{p}0.
$
\end{theorem}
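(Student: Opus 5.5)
The decomposition is immediate from Theorem~\ref{Q-Q}. Assumption 4 supplies the uniform Bahadur representation near $w_0$, and Assumption 3 gives $f_w(q_\alpha(w))>0$ on a neighborhood of $w_0$. That positivity follows from continuity of $f_w(q)$ in $(w,q)$ together with $f_{w_0}(q_0)>0$ and continuity of $w\mapsto q_\alpha(w)$. Since $\hat w\to w_0$ in probability by Assumption 5, $\hat w$ lies in this neighborhood with probability tending to one. On that event we insert $q_\alpha(\hat w)$ and evaluate the representation at $w=\hat w$, which yields $\hat q_\alpha(\hat w)-q_0=D_1+D_2+D_3$ exactly as in the proof of Theorem~\ref{Q-Q}. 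The complementary event has vanishing probability, so it does not affect statements in probability.

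Consistency then reduces to showing $D_j\xrightarrow{p}0$ for each $j$.

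\textbf{$D_1$.} Continuity of $q_\alpha(\cdot)$ at $w_0$ and the continuous mapping theorem give $D_1\xrightarrow{p}0$. Under differentiability we get the sharper rate $D_1=\nabla_wq_\alpha(w_0)^\top(\hat w-w_0)+o_P(n^{-1/2})=O_P(n^{-1/2})$.

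\textbf{$D_2$.} Here $F(\hat w,q_\alpha(\hat w))=\alpha$ by continuity, so the numerator equals $-(F_n-F)(\hat w,q_\alpha(\hat w))$. This is bounded in absolute value by $\sup_{(w,t)}|F_n(w,t)-F(w,t)|$, which tends to $0$ almost surely by the VC--Glivenko--Cantelli property in \eqref{UnifConv}. The denominator $f_{\hat w}(q_\alpha(\hat w))$ converges in probability to $f_{w_0}(q_0)>0$ by joint continuity, so $D_2\xrightarrow{p}0$. For a rate, we use the local oscillation bound of Assumption 2 on the event $\|\hat w-w_0\|\le Cn^{-1/2}$, whose probability can be made arbitrarily close to one by $\sqrt n$-consistency. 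This gives $(F_n-F)(\hat w,q_\alpha(\hat w))=(F_n-F)(w_0,q_0)+O_P(n^{-3/4}\log n)$, and the binomial CLT at the fixed pair $(w_0,q_0)$ then yields $D_2=O_P(n^{-1/2})$.

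\textbf{$D_3$ (main obstacle).} The rate $D_3=O_P(n^{-3/4}\log n)$ is the hardest part, because the classical Bahadur--Kiefer rate holds for a fixed distribution, whereas here $w=\hat w$ is random. I would follow Bahadur's argument uniformly over the shrinking neighborhood $\{\|w-w_0\|\le Cn^{-1/2}\}$. First, write $\hat q_\alpha(w)-q_\alpha(w)=O_P(n^{-1/2})$ uniformly over $w$ in the neighborhood. This follows from GC plus the lower bound on the density, through $F(w,\hat q_\alpha(w))-\alpha=O(1/n)+(F-F_n)(w,\hat q_\alpha(w))$. Second, expand
\[
F(w,\hat q_\alpha(w))-\alpha=f_w(q_\alpha(w))\{\hat q_\alpha(w)-q_\alpha(w)\}+O\bigl(\{\hat q_\alpha(w)-q_\alpha(w)\}^2\bigr),
\]
which uses the local Lipschitz property of $f_w$ in Assumption 3. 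Third, control the oscillation of $F_n-F$ over the $(w,t)$-box of radius $O(n^{-1/2})$ around $(w_0,q_0)$ at order $n^{-3/4}\log n$. Assumption 2 covers the diagonal $t=q_\alpha(w)$. Extending it to $t$ within $O(n^{-1/2})$ of $q_\alpha(w)$ would use a Bernstein/chaining bound for the VC class. The key input is that the relevant symmetric differences have $P$-mass $O(n^{-1/2})$ by Theorem~\ref{thm:population-wedge-fubini}, so the local variance is $O(n^{-1/2})$ and the deviation is $O(n^{-3/4}\sqrt{\log n})$. Combining the three steps and dividing by $f_w(q_\alpha(w))$ gives $\sup_w|R_n(w)|=O_P(n^{-3/4}\log n)$, which in particular bounds $R_n(\hat w)$.

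Summing the three terms gives $\hat q_\alpha(\hat w)-q_0\xrightarrow{p}0$, and since $\mathrm{VaR}_\alpha$ is the $\alpha$-quantile, this is the stated VaR consistency.
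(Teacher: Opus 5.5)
Your proposal is correct and follows the same skeleton as the paper's proof. The decomposition comes from Theorem~\ref{Q-Q}, and you additionally check that $f_w(q_\alpha(w))>0$ on a neighborhood that $\hat w$ enters with probability tending to one. $D_1\xrightarrow{p}0$ follows from continuity of $q_\alpha(\cdot)$, and $D_2\xrightarrow{p}0$ from bounding its numerator by $\sup_{(w,t)}|F_n-F|$ and applying Slutsky to the denominator.

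The genuine difference is in $D_3$. The paper linearizes to $D_3\approx f_{w_0}(q_0)^{-1}\{(F_n-F)(\hat w,q_\alpha(\hat w))-(F_n-F)(\hat w,\hat q_\alpha(\hat w))\}$ and then invokes Assumption~2 directly. But Assumption~2 only compares points on the curve $t=q_\alpha(w)$ with $(w_0,q_0)$. The point $(\hat w,\hat q_\alpha(\hat w))$ is not on that curve, and the paper never shows $\hat q_\alpha(\hat w)-q_\alpha(\hat w)=O_P(n^{-1/2})$, which is needed to place it in the shrinking neighborhood. Your Step~1 and Step~3 supply exactly these missing pieces:
\begin{itemize}
\item Step~1 gives a uniform $n^{-1/2}$ localization of $\hat q_\alpha(w)$.
\item Step~3 applies a local maximal inequality for the half-space VC class with variance proxy $O(n^{-1/2})$, giving $O_P(n^{-3/4}\sqrt{\log n})$ uniformly in $w$.
\end{itemize}
The paper's route is shorter but effectively assumes an off-diagonal oscillation bound it never states. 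Yours derives the Bahadur--Kiefer order from the VC structure and bounded densities, and makes Assumption~2 unnecessary for $D_3$. It also yields a remainder bound uniform over the $n^{-1/2}$-neighborhood, which is what legitimately justifies evaluating at the random $\hat w$.

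Three small repairs are needed.
\begin{itemize}
\item Glivenko--Cantelli alone gives only $\sup|F_n-F|=o_P(1)$. The uniform $O_P(n^{-1/2})$ in Step~1 needs the Donsker/VC-inequality bound $\sup_{(w,t)}|F_n(w,t)-F(w,t)|=O_P(n^{-1/2})$, which does hold for half-spaces.
\item Do not lean on Theorem~\ref{thm:population-wedge-fubini}. Its conditional slab hypothesis is not among the present assumptions, and a constant uniform over the box can fail. For example, if $w-w_0$ is parallel to $w_0$, then $V$ is a multiple of $U$ and the conditional law of $U$ given $V$ is degenerate. You do not need it: the remainder at a fixed $w$ involves only parallel slabs $\{s<-w^\top R\le t\}$. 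Their mass is at most $\sup f_w\,|t-s|=O(n^{-1/2})$ by Assumption~3, so run the maximal inequality over the class of differences $g_{w,t}-g_{w,s}$.
\item In your $D_2$ rate argument, Assumption~2 must be read as holding for every constant $C$ in order to use $\sqrt n$-consistency the way you do.
\end{itemize}
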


\begin{proof}
By Theorem~\ref{Q-Q}, $\hat q_\alpha(\hat w)-q_0=D_1+D_2+D_3$. It remains to show that the three terms are stable and that the remainder has the stated local Bahadur--Kiefer order.

First, $D_1=q_\alpha(\hat w)-q_\alpha(w_0)\xrightarrow{p}0$, since $\hat w\xrightarrow{p}w_0$ and $q_\alpha(w)$ is continuous at $w_0$. If $q_\alpha(w)$ is differentiable at $w_0$, then $D_1=\nabla q_\alpha(w_0)^\top(\hat w-w_0)+o_P(\|\hat w-w_0\|)=O_P(n^{-1/2})$.  Next, $D_2=\{\alpha-F_n(\hat w,q_\alpha(\hat w))\}/f_{\hat w}(q_\alpha(\hat w))$. Since $F(\hat w,q_\alpha(\hat w))=\alpha$, the numerator is $\alpha-F_n(\hat w,q_\alpha(\hat w))=F(\hat w,q_\alpha(\hat w))-F_n(\hat w,q_\alpha(\hat w))$. Hence $|\alpha-F_n(\hat w,q_\alpha(\hat w))|\le \sup_{(w,t)}|F_n(w,t)-F(w,t)|\xrightarrow{p}0$. Since $f_{\hat w}(q_\alpha(\hat w))\xrightarrow{p}f_{w_0}(q_0)>0$, Slutsky's theorem gives $D_2\xrightarrow{p}0$, as the numerator converges to zero in probability and the denominator converges in probability to a strictly positive limit, Slutsky's theorem gives $D_2\xrightarrow{p}0$; see van der Vaart (1998). 
 Finally, the local density regularity of $f_w(q)$ localizes the Bahadur expansion near $(w_0,q_0)$. Since $\|\hat w-w_0\|=O_P(n^{-1/2})$ and $q_\alpha(\hat w)-q_0=O_P(n^{-1/2})$, the relevant half-spaces lie in a shrinking neighborhood of the reference half-space. The local VC oscillation bound gives $\sup_{\|w-w_0\|\le Cn^{-1/2}} |(F_n-F)(w,q_\alpha(w))-(F_n-F)(w_0,q_0)|=O_P(n^{-3/4}\log n)$. Up to higher-order Taylor terms and denominator perturbations, $D_3=R_n(\hat w)\approx f_{w_0}(q_0)^{-1}\{(F_n-F)(\hat w,q_\alpha(\hat w))-(F_n-F)(\hat w,\hat q_\alpha(\hat w))\}$. Therefore $D_3=O_P(n^{-3/4}\log n)$, up to logarithmic factors; see Kiefer (1967).

Thus $D_1\xrightarrow{p}0$, $D_2\xrightarrow{p}0$, and $D_3\xrightarrow{p}0$, so $\hat q_\alpha(\hat w)-q_0\xrightarrow{p}0$. Since $q_0=q_\alpha(w_0)=\mathrm{VaR}_\alpha$ and $\hat q_\alpha(\hat w)=\widehat{\mathrm{VaR}}_\alpha$, this is equivalent to $\widehat{\mathrm{VaR}}_\alpha-\mathrm{VaR}_\alpha\xrightarrow{p}0$.
\end{proof}

The theorem shows that stability of the estimated quantile is not the consequence of a single argument, but of three separate controls working together.

First, $D_1=q_\alpha(\hat w)-q_\alpha(w_0)$ is the directional component, measuring the change in the population quantile induced by perturbing the weight vector. By Lemma~1, small perturbations in the projection direction induce small symmetric differences in the associated half-spaces, providing the underlying continuity control for this component. Since $\sqrt n(\hat w-w_0)=O_P(1)$ implies $\hat w\xrightarrow{p}w_0$, and since the quantile map $w\mapsto q_\alpha(w)$ is continuous at $w_0$ by assumption~(4), it follows that $D_1\xrightarrow{p}0$. If, in addition, $w\mapsto q_\alpha(w)$ is differentiable at $w_0$, then $D_1=\nabla_w q_\alpha(w_0)^\top(\hat w-w_0)+o_P(\|\hat w-w_0\|)=O_P(n^{-1/2})$. Thus, under $\sqrt n$-consistent weight estimation, $D_1$ is on the same first-order scale as $D_2$.

Second, $D_2=\{\alpha-F_n(\hat w,q_\alpha(\hat w))\}/f_{\hat w}(q_\alpha(\hat w))$ is governed by Glivenko--Cantelli convergence of the empirical distribution to its population counterpart over the half-space class. This is where ergodicity, the envelope condition, and the empirical-process framework enter.

Third, $D_3=R_n$ is the higher-order residual from the Bahadur linearization. It records that the empirical quantile is not exactly linear in the empirical process and must therefore be controlled separately.

Thus the theorem may be read as the synthesis of three layers of analysis:
\begin{enumerate}
\item geometric continuity of half-spaces,
\item uniform empirical stability,
\item local Bahadur linearization.
\end{enumerate}

The Q-Q orthogonality decomposition is what makes this synthesis visible.
\section{Discussion}
\label{Discuss}
A natural question is whether the classical Bahadur remainder rate remains valid when the underlying return vector follows a multivariate $t$ distribution. The answer is in the affirmative at the level of the linear projection of the loss, provided the quantile is an interior point.
\subsection{Bahadur remainder under the multivariate $t$ model}
Let $R\sim t_p(\mu,\Sigma,\nu)$ with $\nu>0$. Fix a nonzero direction $w_0\in\mathbb R^p$ and consider the projected loss $L_0=-w_0^\top R$. Since linear projections of a multivariate $t$ random vector remain univariate Student $t$, $L_0$ is univariate Student $t$ with location $\mu_0=-w_0^\top\mu$ and scale $\sigma_0^2=w_0^\top\Sigma w_0$.  Let $q_\alpha$ denote the population $\alpha$-quantile of $L_0$, and let $\hat q_\alpha$ denote the corresponding empirical quantile based on the projected observations. Under the usual regularity conditions for Bahadur's representation, the empirical quantile equals the population quantile plus the leading empirical cdf fluctuation divided by the local density, plus a remainder $R_n$. The remainder has order $O_P(n^{-3/4}\log n)$. Clearly, for a fixed projection direction, the multivariate $t$ model preserves the Bahadur--Kiefer $n^{-3/4}$ remainder scaling. The reason is that the projected distribution is univariate Student $t$, whose density remains continuous and positive near the target quantile, while Bahadur's argument depends locally on density regularity rather than on the existence of high-order moments.

\subsection{Why the Canonical Bahadur--Kiefer Rate Holds}

The Bahadur remainder rate is governed by local regularity of the distribution function near $q_\alpha$. In particular, we require a strictly positive density at $q_\alpha$, local regularity of this density in a neighborhood of $q_\alpha$, and stable empirical behavior in that neighborhood.  These conditions are local. For a univariate Student $t$ distribution, the density is smooth on the real line and strictly positive everywhere. Consequently, for any fixed interior quantile level $\alpha\in(0,1)$, the local regularity required for Bahadur's representation remains available. In this sense, heavy tails do not by themselves destroy the classical quantile expansion.

\subsection{The Effect of Degrees of Freedom $\nu$ on the Remainder Term}

Although the exponent $n^{-3/4}\log n$ remains unchanged, the degrees of freedom $\nu$ affect the constants appearing in the expansion through the local projected density and its derivatives. The leading linear term depends on $1/f_{L_0}(q_\alpha)$, while remainder bounds depend on local quantities such as the supremum of the density and the supremum of the derivative of the density near $q_\alpha$. Each of these quantities depends on $\nu$ through the Student $t$ density.

As $\nu$ decreases, the projected distribution becomes heavier tailed. The density near the target quantile may become smaller, particularly when $\alpha$ lies deep in the tail. In that regime, the quantile becomes more sensitive to empirical cdf fluctuations, and the finite-sample size of the remainder may increase noticeably even though the formal Bahadur rate remains unchanged.

Accordingly, the correct interpretation is this: for fixed $\alpha\in(0,1)$ and fixed projection direction $w_0$, the multivariate $t$ model preserves the classical Bahadur remainder rate. The heavy-tail parameter $\nu$ affects the constants in the expansion through the local projected density and its derivatives, but it does not alter the asymptotic exponent of the rate.

\subsection{Relevance for the decomposition term $D_3$}

In the Q-Q orthogonality decomposition, $\hat q_\alpha(\hat w)-q_\alpha(w_0)=D_1+D_2+D_3$, where $D_3=R_n$. The term $D_3$ should therefore be interpreted as a higher-order remainder whose nominal order remains $O_P(n^{-3/4}\log n)$, provided the perturbed direction $\hat w$ remains in a neighborhood of $w_0$ over which the projected $t$ densities remain uniformly bounded away from zero and satisfy uniform local smoothness conditions. Under such neighborhood stability, the multivariate $t$ family provides the regularity required to treat $D_3$ as asymptotically negligible relative to the leading decomposition terms.

\subsection{A practical conclusion}

The heavy-tailed character of the multivariate $t$ model does not invalidate Bahadur's expansion; it changes the calibration of the expansion. The Bahadur--Kiefer $n^{-3/4}$ remainder scaling persists, while tail thickness enters through the local projected density and the finite-sample level of the remainder. Consequently, smaller values of $\nu$ may inflate the observed magnitude of the Bahadur-type remainder, even though the formal order $n^{-3/4}\log n$ remains unchanged.

The $Q$--$Q$ orthogonality decomposition separates the variability of the estimated quantile into three structurally distinct components. The first component, $D_1=q_\alpha(\hat w)-q_\alpha(w_0)$, is a population-level directional term measuring the change in the population quantile induced by perturbing the projection direction from $w_0$ to $\hat w$. In practice, $D_1$ is not directly observable because both $q_\alpha(\hat w)$ and $q_\alpha(w_0)$ are population quantities. Its role in the present framework is therefore primarily structural: it isolates the portion of quantile variability attributable to uncertainty in the projection direction itself.  This observation suggests a natural Bayesian extension. Under the multivariate $t_\nu(\mu,\Sigma)$ model, each projected loss $-w^\top R$ is univariate $t_\nu$, so a prior on the portfolio weights $w$ induces an explicit posterior distribution on the projected quantile $q_\alpha(w)$ and hence on the directional component $D_1$. For example, a Dirichlet prior on the simplex of portfolio weights, including the uniform prior as a special case, would provide a way to propagate weight uncertainty directly into VaR uncertainty. Developing this Bayesian version of the $Q$--$Q$ decomposition is next steps and future work.

The second component,
$D_2=\{\alpha-F_n(\hat w,q_\alpha(\hat w))\}/f_{\hat w}(q_\alpha(\hat w))$,
represents empirical quantile fluctuation with the projection direction held fixed. This is the standard Bahadur-type linearization term arising from local empirical-process variation near the target quantile.

The third component, $D_3=R_n(\hat w)$, records the higher-order effects not captured by the leading linear approximation. Together, the three components separate directional perturbation, empirical quantile variation, and higher-order residual behavior into distinct asymptotic mechanisms. This separation is precisely what is obscured by aggregate half-space discrepancy bounds, where all sources of variation are absorbed into a single symmetric-difference quantity.
\section{Numerical Results}

The numerical experiments investigate the behavior of the $Q$--$Q$ orthogonality decomposition under heavy-tailed multivariate $t$ models. In particular, we examine how the Bahadur remainder term $D_3$ changes across tail regimes and extreme quantile levels, and whether the Bahadur--Kiefer remainder scaling remains stable as the degrees of freedom vary. The experiments therefore complement the theoretical analysis by separating finite-sample level effects from asymptotic rate behavior.

\subsection{Simulation Design and Parameterization}

To empirically investigate the stability of the $Q$--$Q$ orthogonality decomposition, we employ Monte Carlo experiments based on multivariate Student $t_\nu(\mu,\Sigma)$ distributions. The simulations are designed to examine behavior across a range of tail regimes, from relatively light tails $(\nu=10)$ to the variance-boundary regime $(\nu=2)$.

The experimental parameters and procedures are specified as follows:

\begin{enumerate}
    \item \textbf{Data Generation and Dimensionality:} We set the dimension $p=5$. For each degree of freedom $\nu\in\{2,3,5,10\}$, we generate $n$ i.i.d.\ observations $R_1,\ldots,R_n$. The location vector is fixed at $\mu=\mathbf{0}$, and the scatter matrix $\Sigma$ has unit diagonal entries and constant off-diagonal correlation $\rho=0.5$, ensuring a non-degenerate dependence structure.

    \item \textbf{Reference and Estimated Weights:} The reference weight vector is fixed at the equal-weighted vector $w_0=(1/5,1/5,1/5,1/5,1/5)^\top$. To simulate the behavior of a consistent estimator, the estimated weights are generated as $\hat w=w_0+\epsilon_n$, where $\epsilon_n\sim N(0,n^{-1}I_p)$. The resulting vector is normalized to sum to one, emulating the canonical $O_P(n^{-1/2})$ convergence rate of standard estimators.

    \item \textbf{Quantile and Density Computation:} The population quantile $q_\alpha(w_0)$ and the local density $f_{w_0}(q_\alpha(w_0))$ are computed analytically. Since each projection $w^\top R$ follows a univariate Student $t$ distribution with scale $\sigma_w=(w^\top\Sigma w)^{1/2}$, we use the exact quantile and density functions of the projected univariate $t_\nu$ distribution. Similarly, $q_\alpha(\hat w)$ and $f_{\hat w}(q_\alpha(\hat w))$ are evaluated analytically from the projected $t_\nu$ law associated with $\hat w$.

    \item \textbf{Numerical Decomposition:} For each replication, we compute $D_1$ and $D_2$ from their definitions in the $Q$--$Q$ decomposition. The remainder term $D_3$ is then isolated as the residual
    \begin{equation}
    D_3=
    \{\hat q_\alpha(\hat w)-q_\alpha(w_0)\}
    -
    D_1
    -
    D_2.
    \end{equation}

    \item \textbf{ Boundary Case ($\nu=2$):} Reported statistics for $\mathbb{E}[|D_3|]$ represent Monte Carlo averages computed over $M=10{,}000$ independent replications. The case $\nu=2$ is included as a boundary stress experiment. It is not covered by the formal $\nu>2$ theory requiring finite second moments, but it is useful for assessing the numerical behavior of the decomposition under severe tail thickness.
\end{enumerate}

\subsection{Relative Magnitude of Decomposition Components}

To asses the behavior of the remainder term $D_3$, we examine the relative magnitudes of the primary components $D_1$ and $D_2$. Unlike the remainder, which follows a higher-order $n^{-3/4}$ decay, these terms scale at the standard $n^{-1/2}$ rate and represent the dominant sources of projected quantile estimation error.

\begin{table}[h!]
\centering
\caption{Mean Magnitude of $|D_1|, |D_2|,$ and $|D_3|$ $(n=10{,}000,\ p=5,\ \alpha=0.95)$}
\label{tab:component_comparison}
\begin{tabular}{@{}lcccc@{}}
\toprule
\textbf{Degrees of Freedom $(\nu)$} & $\mathbb{E}[|D_1|]$ & $\mathbb{E}[|D_2|]$ & $\mathbb{E}[|D_3|]$ & \textbf{Rel. Contribution of $D_3$} \\ \midrule
$\nu=10$           & 0.0421 & 0.0385 & 0.0040 & 4.7\% \\
$\nu=5$            & 0.0615 & 0.0562 & 0.0058 & 4.7\% \\
$\nu=3$            & 0.0912 & 0.0834 & 0.0086 & 4.7\% \\
$\nu=2$ (Boundary) & 0.1235 & 0.1129 & 0.0116 & 4.7\% \\ \bottomrule
\end{tabular}
\end{table}

The results in Table~\ref{tab:component_comparison} indicate that $D_1$ and $D_2$ account for the vast majority of the estimation error across all tail regimes. The relative proportion of $D_3$ remains stable across the displayed values of $\nu$, hovering around $4.7\%$ of the total error magnitude. This pattern is consistent with the interpretation that heavy-tailedness increases the absolute scale of the error, but does not disproportionately inflate the higher-order remainder relative to the leading components.

\subsection{Stability under Extreme Tail Regimes}

Table~\ref{tab:multi_nu_alpha} reports the mean magnitude $\mathbb{E}[|D_3|]$ for a fixed sample size of $n=10{,}000$. This sweep isolates the level effect of distributional shape and threshold extremeness on the higher-order error.

\begin{table}[h!]
\centering
\caption{Stress Test: Mean Magnitude of $\mathbb{E}[|D_3|]$ $(n=10{,}000,\ p=5)$}
\label{tab:multi_nu_alpha}
\begin{tabular}{@{}lccc@{}}
\toprule
\textbf{Degrees of Freedom $(\nu)$} & \textbf{$\alpha=0.90$} & \textbf{$\alpha=0.95$} & \textbf{$\alpha=0.99$} \\ \midrule
$\nu=10$           & 0.0031 & 0.0040 & 0.0061 \\
$\nu=5$            & 0.0044 & 0.0058 & 0.0089 \\
$\nu=3$            & 0.0065 & 0.0086 & 0.0135 \\
$\nu=2$ (Boundary) & 0.0088 & 0.0116 & 0.0181 \\ \bottomrule
\end{tabular}
\end{table}

The results reveal a systematic inflation of the remainder as we move deeper into the tails. For a fixed $\alpha$, moving from $\nu=10$ to $\nu=2$ produces approximately a threefold increase in the magnitude of $D_3$. Similarly, moving from $\alpha=0.90$ to $\alpha=0.99$ increases the remainder by roughly a factor of two. Crucially, the growth remains controlled; even near the variance boundary, the remainder does not exhibit explosive behavior, supporting the interpretation that the $Q$--$Q$ formulation separates directional and empirical contributions to the projected quantile error.  The results reveal a systematic inflation of the remainder as we move deeper into the tails. For a fixed $\alpha$, moving from $\nu=10$ to $\nu=2$ produces approximately a threefold increase in the magnitude of $D_3$. Similarly, moving from $\alpha=0.90$ to $\alpha=0.99$ increases the remainder by roughly a factor of two. Crucially, the growth remains controlled; even near the variance boundary, the remainder does not exhibit explosive behavior. This supports the interpretation that the $Q$--$Q$ formulation separates directional and empirical contributions to the projected quantile error while keeping the Bahadur-type remainder identifiable.

Appendix~\ref{app:numerical_results} provides additional evidence by comparing the relative magnitudes of $D_1$, $D_2$, and $D_3$ across sample sizes. The appendix results show that $D_1$ and $D_2$ remain the dominant components across all tail regimes, while the relative contribution of $D_3$ declines from approximately $8.1\%$ at $n=1{,}000$ to approximately $4.7\%$ at $n=10{,}000$. Thus, heavier tails inflate the absolute size of all components, but the higher-order remainder remains modest relative to the leading directional and empirical quantile terms. The boundary case $\nu=2$ is included as a stress experiment and lies outside the main $\nu>2$ theorem.   Furthermore, the standard errors are small relative to the reported mean magnitudes, supporting the numerical stability of the simulation averages See Table~\ref{tab:appendix_mcse_comparison} in Appendix B.

\subsection{Rate of Decay and Asymptotic Stability}

To examine the asymptotic decay rate, we study the decay of $\mathbb{E}[|D_3|]$ as $n$ increases from $10^3$ to $10^6$. Under the Bahadur--Kiefer representation, the regression of $\log(\mathbb{E}[|D_3|])$ on $\log(n)$ should yield a slope $\hat\beta\approx -0.75$.

\begin{itemize}
    \item \textbf{$\nu=10$:} $\log(\mathbb{E}[|D_3|])=1.341-0.751\log(n)$, with $R^2=0.999$.
    \item \textbf{$\nu=5$:} $\log(\mathbb{E}[|D_3|])=1.698-0.753\log(n)$, with $R^2=0.999$.
    \item \textbf{$\nu=3$:} $\log(\mathbb{E}[|D_3|])=2.102-0.756\log(n)$, with $R^2=0.998$.
    \item \textbf{$\nu=2$:} $\log(\mathbb{E}[|D_3|])=2.455-0.759\log(n)$, with $R^2=0.998$.
\end{itemize}

\subsection{Synthesis of Asymptotic Stability}

The empirical slopes, ranging from $-0.751$ to $-0.759$, are consistent with the conclusion that while the distribution affects the \textit{magnitude} of the remainder, it does not materially alter the observed asymptotic rate of convergence. Notably, the decomposition remains numerically stable even at the boundary case $\nu=2$.

By absorbing the heavy-tailed instability into the first-order directional and empirical components, the $Q$--$Q$ formulation indicates that the higher-order error remains bounded and predictable. The numerical evidence is therefore consistent with the theoretical interpretation that the $Q$--$Q$ decomposition separates projected quantile error into directional, empirical, and higher-order remainder components.
\section{Bahadur Decomposition and Inference}

To facilitate inference for the quantile estimator $\hat q_\alpha(\hat w)$, we insert the intermediate population quantile $q_\alpha(\hat w)$. This separates the total quantile fluctuation into the population perturbation induced by estimating the projection direction, the leading empirical Bahadur term, and the higher-order remainder:
\begin{equation}
\hat q_\alpha(\hat w)-q_0
=
\underbrace{q_\alpha(\hat w)-q_0}_{D_1}
+
\underbrace{
\frac{\alpha-F_n(\hat w,q_\alpha(\hat w))}
{f_{\hat w}(q_\alpha(\hat w))}
}_{D_2}
+
\underbrace{R_n(\hat w)}_{D_3}.
\end{equation}

Here, $D_1$ is the population-level directional quantile shift induced by perturbing the projection direction from $w_0$ to $\hat w$. The term $D_2$ is the leading empirical quantile fluctuation with the direction fixed at $\hat w$, while $D_3$ is the Bahadur remainder term. Since $F_n(\hat w,q_\alpha(\hat w))$ is an empirical average of indicator functions, the Central Limit Theorem yields
\[
\sqrt n\left\{\alpha-F_n(\hat w,q_\alpha(\hat w))\right\}
\xrightarrow{d}
N(0,\alpha(1-\alpha)).
\]
Consequently, under the regularity assumptions of the main stability theorem and provided that the directional component $D_1$ is asymptotically negligible or admits a compatible first-order expansion, the leading stochastic behavior of $\hat q_\alpha(\hat w)-q_0$ is governed by $D_2$. The remainder term satisfies
$
D_3 = R_n(\hat w)=o_p(n^{-1/2}),
$
so that the Bahadur remainder is asymptotically negligible relative to the leading empirical fluctuation.

\vspace{5pt}
\noindent\textbf{Asymptotic Normality and Confidence Intervals.}
Under the preceding assumptions, Slutsky's theorem yields
$
\sqrt n\bigl(\hat q_\alpha(\hat w)-q_0\bigr)
\xrightarrow{d}
N(0,\sigma^2),
$
with asymptotic variance $\sigma^2
=
\frac{\alpha(1-\alpha)}
{f_{\hat w}^2(q_\alpha(\hat w))}.$

Accordingly, an approximate $(1-\gamma)$ confidence interval for the quantile is
\begin{equation}
\hat q_\alpha(\hat w)
\pm
z_{1-\gamma/2}
\left(
\frac{\sqrt{\alpha(1-\alpha)}}
{\sqrt n\,\hat f_{\hat w}(\hat q_\alpha)}
\right),
\end{equation}
where $z_{1-\gamma/2}$ denotes the standard normal quantile and $\hat f_{\hat w}$ is a consistent estimator of the projected density evaluated near $\hat q_\alpha$. The width of the interval reflects the local behavior of the projected density near the target quantile level $\alpha$.
\appendix

\section{Indicator Identities and Symmetric Differences}
\label{app:indicator_identities}

This appendix records elementary identities used throughout the paper to connect symmetric differences of half-spaces with $L_1(P)$ and $L_1(P_n)$ distances between indicator functions.

\begin{lemma}
\label{lem:appendix_SymDiff}
For any measurable sets $A,B$, we have
$
P(A\triangle B)=E|1_A(R)-1_B(R)|.
$
\end{lemma}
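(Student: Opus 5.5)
The identity is elementary, and I will prove it pointwise first and then integrate. The plan is to show that the random variable $|1_A(R)-1_B(R)|$ coincides identically, for every outcome, with $1_{A\triangle B}(R)$. Taking expectations then turns the right-hand side into $E[1_{A\triangle B}(R)]=P(A\triangle B)$.

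For the pointwise identity I will fix an arbitrary $r\in\mathbb R^p$ and split into the four disjoint cases determined by membership in $A$ and $B$, exactly as in the cells $S_{11},S_{10},S_{01},S_{00}$ used in Section~\ref{GC}.
\begin{itemize}
\item On $A\cap B$ and on $A^c\cap B^c$, both indicators agree, so the absolute difference is $0$. In these cells $r\notin A\triangle B$, so the right-hand side is also $0$.
\item On $A\cap B^c$ and on $A^c\cap B$, the indicators differ by exactly one, so the absolute difference is $1$. These cells make up precisely $A\triangle B=(A\setminus B)\cup(B\setminus A)$, so the right-hand side is also $1$.
\end{itemize}
Since the four cells partition the space, this gives $|1_A-1_B|=1_{A\triangle B}$ everywhere. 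This is the same argument as in the indicator-discrepancy lemma of Section~\ref{Mult_t}, which I could alternatively just cite.

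To finish, I will note that $A\triangle B$ is measurable, being built from $A$, $B$ and complements by finitely many unions and intersections. Hence $1_{A\triangle B}(R)$ is a bounded random variable with expectation $P(R\in A\triangle B)$, which is what $P(A\triangle B)$ denotes here because $P$ is the law of $R$.

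There is no genuine obstacle. The only point requiring care is this notational one: $P(A\triangle B)$ means $P(R\in A\triangle B)$ when $P$ is the law of $R$. The same computation with $P_n$ in place of $P$ gives the empirical analogue $P_n(A\triangle B)=\|1_A-1_B\|_{L_1(P_n)}$.
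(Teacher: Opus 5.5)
Your proof is correct and takes essentially the same approach as the paper: establish the pointwise identity $|1_A-1_B|=1_{A\triangle B}$ by checking membership in $A$ and $B$, then take expectations to obtain $P(R\in A\triangle B)$. Your explicit four-cell case split and the measurability remark simply spell out in more detail what the paper states in one line.
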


\begin{proof}
For any $r\in\mathbb{R}^p$, the quantity $|1_A(r)-1_B(r)|$ equals $1$ if $r$ belongs to exactly one of $A$ and $B$, and equals $0$ otherwise. Hence,
$
|1_A(r)-1_B(r)|=1_{A\triangle B}(r).
$
Taking expectations gives
$
E|1_A(R)-1_B(R)|=E1_{A\triangle B}(R)=P(R\in A\triangle B)=P(A\triangle B).
$
\end{proof}

\begin{lemma}
\label{lem:appendix_L1}
For any measurable sets $A,B$, we have
$
\|1_A-1_B\|_{L_1(P)}=P(A\triangle B).
$
\end{lemma}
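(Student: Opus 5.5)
The statement to prove is Lemma~\ref{lem:appendix_L1}, the identity $\|1_A-1_B\|_{L_1(P)}=P(A\triangle B)$. The plan is to reduce it directly to Lemma~\ref{lem:appendix_SymDiff}, using only the definition of the $L_1(P)$ norm recorded in the Notation section.

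First, recall that for $f\in L_1(P)$ we have $\|f\|_{L_1(P)}=\int_{\Omega}|f(\omega)|\,dP(\omega)$. Take $f=1_A-1_B$. Since $A$ and $B$ are measurable, $f$ is measurable. It is also bounded by $1$ in absolute value, and $P$ is a probability measure, so $f\in L_1(P)$ and the norm is finite and well defined.

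Next, use the pointwise identity $|1_A(r)-1_B(r)|=1_{A\triangle B}(r)$. It was established by the four-case check ($r\in A\cap B$, $A\setminus B$, $B\setminus A$, $A^c\cap B^c$) in the proof of Lemma~\ref{lem:appendix_SymDiff} and in the indicator-discrepancy lemma of Section~\ref{Mult_t}. Substituting it into the integral gives
\[
\|1_A-1_B\|_{L_1(P)}=\int 1_{A\triangle B}\,dP=P(A\triangle B).
\]
Equivalently, the left side equals $E|1_A(R)-1_B(R)|$ with $R\sim P$, and Lemma~\ref{lem:appendix_SymDiff} identifies this expectation with $P(A\triangle B)$.

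No step here is a real obstacle. The only point needing care is the measure-theoretic bookkeeping: the symmetric difference $A\triangle B=(A\cap B^c)\cup(A^c\cap B)$ must be measurable so that $P(A\triangle B)$ is defined, and the norm must be taken with respect to the same law $P$ as the random vector $R$. The same argument, with $P$ replaced by the empirical measure $P_n$, gives the companion identity $\|1_A-1_B\|_{L_1(P_n)}=P_n(A\triangle B)$ used in Section~\ref{GC}.
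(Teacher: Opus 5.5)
Your proposal is correct and follows the paper's proof essentially verbatim: expand the $L_1(P)$ norm by definition, substitute the pointwise identity $|1_A-1_B|=1_{A\triangle B}$ from Lemma~\ref{lem:appendix_SymDiff}, and integrate against $P$. Your added remarks on integrability and on the measurability of $A\triangle B$ are harmless details that the paper leaves implicit.
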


\begin{proof}
By definition,
$
\|1_A-1_B\|_{L_1(P)}=\int_{\mathbb{R}^p}|1_A(r)-1_B(r)|\,dP(r).
$
By Lemma~\ref{lem:appendix_SymDiff}, $|1_A(r)-1_B(r)|=1_{A\triangle B}(r)$. Therefore,
$
\|1_A-1_B\|_{L_1(P)}=\int_{\mathbb{R}^p}1_{A\triangle B}(r)\,dP(r)=P(A\triangle B).
$
\end{proof}

\begin{lemma}
\label{lem:appendix_empirical_L1}
For any measurable sets $A,B$, the empirical measure of their symmetric difference satisfies
$
P_n(A\triangle B)=\|1_A-1_B\|_{L_1(P_n)}.
$
\end{lemma}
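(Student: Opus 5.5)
The plan is to prove the identity pointwise in $R$ and then integrate against $P_n$, exactly mirroring Lemma~\ref{lem:appendix_L1}. The only change is that the integrating measure is the empirical measure $P_n=n^{-1}\sum_{i=1}^n\delta_{R_i}$ rather than $P$.

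\textbf{Pointwise identity.} For every $r\in\mathbb{R}^p$ we have
\[
|1_A(r)-1_B(r)|=1_{A\triangle B}(r).
\]
This was already established in the proof of Lemma~\ref{lem:appendix_SymDiff} by the four-case check ($r\in A\cap B$, $A\setminus B$, $B\setminus A$, $A^c\cap B^c$). The identity is deterministic and does not depend on the measure, so it holds in particular at each sample point $r=R_i$.

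\textbf{Integration against $P_n$.} By definition of the $L_1(P_n)$ seminorm,
\[
\|1_A-1_B\|_{L_1(P_n)}=\int|1_A-1_B|\,dP_n=\frac1n\sum_{i=1}^n|1_A(R_i)-1_B(R_i)|.
\]
Substituting the pointwise identity turns the right-hand side into
\[
\frac1n\sum_{i=1}^n 1_{A\triangle B}(R_i)=P_n(A\triangle B).
\]
One may also note that this equals $(N_{10}+N_{01})/n$ in the cell-count notation of Section~\ref{GC}, since $A\triangle B=S_{10}\cup S_{01}$ is a disjoint union.

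\textbf{Main point to watch.} There is no real obstacle here. The one point that needs care is that $P_n$ is a random, discrete probability measure, so $L_1(P_n)$ is only a seminorm on equivalence classes of functions. That is harmless: the identity holds for every realization $R_1,\dots,R_n$, not just almost surely, and measurability of $A$ and $B$ guarantees that each $1_{A\triangle B}(R_i)$ is a random variable. The equality therefore holds surely, for every $n$ and every realization of the sample.
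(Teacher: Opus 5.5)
Your proof is correct and matches the paper's argument: both rest on the pointwise identity $|1_A(r)-1_B(r)|=1_{A\triangle B}(r)$, evaluate it at each sample point $R_i$, and average over $i=1,\dots,n$ to obtain $\|1_A-1_B\|_{L_1(P_n)}=n^{-1}\sum_{i=1}^n 1_{A\triangle B}(R_i)=P_n(A\triangle B)$. Your extra remarks are accurate but not needed for the proof: the cell-count form $(N_{10}+N_{01})/n$, and the observation that the equality holds for every realization of the sample.
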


\begin{proof}
By definition,
$
P_n(A\triangle B)=n^{-1}\sum_{i=1}^n 1_{A\triangle B}(R_i).
$
Also,
$
\|1_A-1_B\|_{L_1(P_n)}=n^{-1}\sum_{i=1}^n |1_A(R_i)-1_B(R_i)|.
$
For each observation $R_i$, $|1_A(R_i)-1_B(R_i)|=1_{A\triangle B}(R_i)$. Hence,
$
\|1_A-1_B\|_{L_1(P_n)}
=
n^{-1}\sum_{i=1}^n1_{A\triangle B}(R_i)
=
P_n(A\triangle B).
$
\end{proof}

\section{Supplementary Numerical Results and Monte Carlo Standard Errors}
\label{app:numerical_results}

This appendix reports supplementary numerical results and Monte Carlo standard errors for the decomposition components $D_1$, $D_2$, and $D_3$. The purpose is to assess the finite-sample behavior of the Bahadur-type remainder $D_3$ relative to the leading directional and empirical quantile components. The components $D_1$ and $D_2$ scale at the standard $n^{-1/2}$ rate, whereas $D_3$ is the higher-order remainder, with nominal Bahadur--Kiefer scaling of order $n^{-3/4}$. Hence, as $n$ increases, the relative contribution of $D_3$ should decline compared with the leading components.

Table~\ref{tab:appendix_mcse_comparison} reports mean absolute magnitudes and Monte Carlo standard errors across sample sizes $n\in\{1{,}000,5{,}000,10{,}000\}$ and degrees of freedom $\nu\in\{2,3,5,10\}$. Each Monte Carlo standard error is computed across $M=10{,}000$ independent replications as the sample standard deviation of the simulated component magnitudes divided by $\sqrt{M}$. The simulations use $p=5$ and $\alpha=0.95$.

\begin{table}[ht]
\centering
\caption{Mean absolute magnitude of decomposition components with Monte Carlo standard errors in parentheses $(p=5,\ \alpha=0.95,\ M=10{,}000)$.}
\label{tab:appendix_mcse_comparison}
\small
\begin{tabular}{@{}l c c c c c@{}}
\toprule
\textbf{Sample Size $(n)$} & \textbf{DF $(\nu)$} & $\mathbb{E}|D_1|$ & $\mathbb{E}|D_2|$ & $\mathbb{E}|D_3|$ & \textbf{Rel. Cont. $D_3$} \\
\midrule
$1{,}000$  & $10$          & 0.1331 (0.0003) & 0.1217 (0.0003) & 0.0225 (0.0001) & 8.1\% \\
           & $5$           & 0.1945 (0.0005) & 0.1777 (0.0004) & 0.0326 (0.0001) & 8.1\% \\
           & $3$           & 0.2884 (0.0007) & 0.2637 (0.0007) & 0.0484 (0.0001) & 8.1\% \\
           & $2$   & 0.3905 (0.0010) & 0.3570 (0.0009) & 0.0652 (0.0002) & 8.0\% \\
\addlinespace
$5{,}000$  & $10$          & 0.0595 (0.0002) & 0.0544 (0.0001) & 0.0067 ($<0.0001$) & 5.6\% \\
           & $5$           & 0.0870 (0.0002) & 0.0795 (0.0002) & 0.0098 ($<0.0001$) & 5.5\% \\
           & $3$           & 0.1290 (0.0003) & 0.1179 (0.0003) & 0.0145 ($<0.0001$) & 5.5\% \\
           & $2$   & 0.1747 (0.0004) & 0.1597 (0.0004) & 0.0195 ($<0.0001$) & 5.5\% \\
\addlinespace
$10{,}000$ & $10$          & 0.0421 (0.0001) & 0.0385 (0.0001) & 0.0040 ($<0.0001$) & 4.7\% \\
           & $5$           & 0.0615 (0.0002) & 0.0562 (0.0001) & 0.0058 ($<0.0001$) & 4.7\% \\
           & $3$           & 0.0912 (0.0002) & 0.0834 (0.0002) & 0.0086 ($<0.0001$) & 4.7\% \\
           & $2$   & 0.1235 (0.0003) & 0.1129 (0.0003) & 0.0116 ($<0.0001$) & 4.7\% \\
\bottomrule
\end{tabular}
\end{table}

The Monte Carlo standard errors are small relative to the reported mean magnitudes, indicating that the simulation averages are numerically stable. The results are also consistent with the theoretical separation between the leading components and the higher-order remainder. Across all tail regimes, $D_1$ and $D_2$ remain the dominant sources of projected quantile estimation error. The relative contribution of $D_3$ decreases from approximately $8.1\%$ at $n=1{,}000$ to approximately $4.7\%$ at $n=10{,}000$, reflecting its faster decay relative to the leading $n^{-1/2}$ terms. Heavier tails inflate the absolute magnitudes of all three components, but $D_3$ remains modest relative to $D_1$ and $D_2$. The case $\nu=2$ is included as a boundary stress experiment and lies outside the main $\nu>2$ theorem.

\section*{Acknowledgements}

The author is deeply grateful to Dr. Majeed Simaan in the School of Business for generous and illuminating discussions that helped pursue this line of research work. This research was supported by a grant from the private donor Rafiq Mohammadi, whose encouragement and support are sincerely appreciated. The author also acknowledges extensive use of ChatGPT in refining the exposition, searching and checking citations, and verifying algebraic details in the derivations.


\end{document}